\RequirePackage{iftex}\ifPDFTeX\pdfoutput=1\fi
\documentclass[runningheads,envcountsame]{llncs}

\usepackage[T1]{fontenc}
\usepackage{graphicx}
\usepackage{amsmath, amssymb}
\usepackage{mathtools}
\usepackage{booktabs}
\usepackage[bookmarks=false,hidelinks]{hyperref}

\makeatletter
\renewcommand\section{\@startsection{section}{1}{\z@}%
                       {-13\p@ \@plus -4\p@ \@minus -4\p@}%
                       {8\p@ \@plus 4\p@ \@minus 4\p@}%
                       {\normalfont\large\bfseries\boldmath
                        \rightskip=\z@ \@plus 8em\pretolerance=10000 }}
\renewcommand\subsection{\@startsection{subsection}{2}{\z@}%
                       {-13\p@ \@plus -4\p@ \@minus -4\p@}%
                       {5\p@ \@plus 4\p@ \@minus 4\p@}%
                       {\normalfont\normalsize\bfseries\boldmath
                        \rightskip=\z@ \@plus 8em\pretolerance=10000 }}
\renewcommand\paragraph{\@startsection{paragraph}{4}{\z@}%
                       {-8\p@ \@plus -4\p@ \@minus -4\p@}%
                       {-0.5em \@plus -0.22em \@minus -0.1em}%
                       {\normalfont\normalsize\itshape}}
\g@addto@macro\normalsize{%
  \setlength\abovedisplayskip{7\p@ \@plus 2\p@ \@minus 4\p@}%
  \setlength\belowdisplayskip{7\p@ \@plus 2\p@ \@minus 4\p@}%
  \setlength\abovedisplayshortskip{4\p@ \@plus 2\p@}%
  \setlength\belowdisplayshortskip{4\p@ \@plus 2\p@ \@minus 2\p@}%
}
\makeatother

\newcommand{\Rint}{r_{\mathrm{int}}}
\newcommand{\Vepi}{V_{\mathrm{epi}}}
\newcommand{\Vale}{V_{\mathrm{ale}}}
\newcommand{\Valea}{V_{\mathrm{ale}}^{\mathrm{aug}}}
\newcommand{\Vepiahead}{\widetilde V_{\mathrm{epi}}^{(h_g)}}
\newcommand{\Valeprobe}{\widetilde V_{\mathrm{ale}}^{\mathrm{probe}}}
\newcommand{\Valeraw}{\widetilde V_{\mathrm{ale}}^{\mathrm{raw}}}
\newcommand{\Etil}{\widetilde E}
\newcommand{\Ehat}{\widehat E}
\newcommand{\Ebar}{\overline E}
\newcommand{\piref}{\pi_{\mathrm{ref}}}
\newcommand{\piprobe}{\pi_{\mathrm{probe}}}
\newcommand{\Qmax}{Q_{\max}}
\newcommand{\Tref}{T^{\piref}}
\newcommand{\IEFE}{\mathrm{IEFE}}
\newcommand{\Eref}{E_{\mathrm{ref}}}
\newcommand{\eps}{\varepsilon}
\newcommand{\Acal}{\mathcal{A}}
\newcommand{\Scal}{\mathcal{S}}

\newcommand{\Var}{\mathrm{Var}}
\newcommand{\tr}{\mathrm{tr}}
\newcommand{\KL}{\mathrm{KL}}

\begin{document}

\title{Principled Direction-Free Intrinsic Motivation\\
       through Model-Free Epistemic Free-Energy Estimators}

\titlerunning{Direction-Free Intrinsic Motivation}

\author{Alireza Furutanpey\thanks{\textbf{Corresponding author}:
        \email{a.furutanpey@coovally.ai}}\fnmsep\inst{1,3}
        \and Schahram Dustdar\inst{2,3}}
\authorrunning{A. Furutanpey and S. Dustdar}

\institute{Coovally, Barcelona, Spain \and
ICREA Barcelona, Spain \and
Distributed Systems Group, TU Vienna, Austria}

\maketitle

\begin{abstract}
\looseness=-1
Across environments with mixed sources of uncertainty, unsupervised reinforcement learning requires intrinsic motivation that does not precommit to a particular direction of surprise. Surprise minimization is scoped by design to ``unstable'' environments. Prediction-error curiosity rewards total expected surprise, including irreducible noise. Bandit or mixture switching between surprise-minimizing and surprise-maximizing rewards reintroduces non-stationarity by construction. We propose a single intrinsic reward, stationary within each window, derived from the novelty contribution of a preference-free Expected Free Energy objective, expressed in reward-maximization form. Our claim is that parameter information gain, the expected surprise of the next state minus its irreducible part, is the appropriate intrinsic signal in both high-entropy and low-entropy components of the state space. Maximizing it seeks exactly the surprise the model can explain away. In regions of unresolved dynamics, this epistemic term drives exploration.  As dynamics become resolved, the epistemic term vanishes, while an aleatoric penalty favors lower-variance transitions, all without fitting an explicit next-state predictor. A pseudocount supplies epistemic value, a probe-based penalty captures aleatoric variance, and a short-horizon gate protects informative successors. A window-based freeze of all reward-defining objects yields a stationary Bellman operator, explicit bounds on learning targets, and a conditional uniform-concentration result for the nonparametric estimators under mixing, smoothness, bandwidth, and capacity assumptions. In active-inference terms, the agent is preference-free where novelty is retained, standard likelihood ambiguity vanishes under full observability, a nonstandard transition-entropy penalty is added, and surprise minimization emerges in resolved regions of the state space.
\keywords{Active inference \and Free Energy Principle \and Intrinsic Motivation \and Unsupervised Reinforcement Learning}
\end{abstract}

\section{Introduction}
\label{sec:intro}

Unsupervised reinforcement learning agents should behave appropriately across environments with mixed sources of uncertainty, spanning components where the state distribution is high-entropy under any policy and components where the agent must acquire control of low-entropy deterministic transitions. Yet the dominant intrinsic-motivation families instead precommit to a single uncertainty structure by design. Surprise minimization~\cite{berseth2021smirl} rewards negative self-information of state visitation under an episode-local density model. The construction is scoped to ``unstable'' environments and admits the dark-room failure mode~\cite{friston2012darkroom} in stable ones. Information-gain and prediction-error families~\cite{burda2019rnd,caron2025bayesexplore,pathak2017icm,pathak2019disagreement,sekar2020plan2explore} target near-deterministic or sparse-reward settings. Prediction-error variants chase stochastic distractors (the noisy-television failure). Ensemble-disagreement variants mitigate this at the cost of a learned forward-model ensemble, with members fitting the same noise converging on a shared conditional mean, so cross-member variance decays while realized prediction error remains high. Recent adaptive approaches select the surprise direction at the architecture level. S-Adapt~\cite{hugessen2024sadapt} switches between a surprise-minimizing and a surprise-maximizing reward via a per-episode UCB bandit. Mixture of Surprises~\cite{zhao2022mixture} trains parallel surprise-min and surprise-max components and switches between them on a fixed within-episode schedule. Both reintroduce non-stationarity by construction, Bellman targets shifting independently of the environment, and the two behaviors remain distinct objectives to select between.

We argue that information gain about dynamics parameters is the appropriate intrinsic signal in both high-entropy and low-entropy components of the state space, and realize it as \emph{PRIME} (Principled Direction-Free Intrinsic Motivation through Model-Free Epistemic Free-Energy Estimators), with no bandit, operator switching, or extrinsic reward. For a preference-free agent, reducing long-term surprise amounts to resolving posterior uncertainty over the dynamics of visited states. Parameter information gain is the agent's expected surprise about the next state minus the irreducible part of that surprise (Section~\ref{sec:method}), so maximizing it seeks exactly the surprise the model can explain away. Wherever dynamics are unresolved, this term dominates, and the agent explores at any entropy level. Once a region is resolved, the term vanishes, and the residual surprise is irreducible. Then, the only remaining option, enforced by the weighted aleatoric term, is to prefer actions that lead to next states with lower expected transition entropy. 
Active inference derives information seeking from long-term surprise minimization, exploration transiently raising surprise in order to lower it later~\cite{friston2015epistemic,schwartenbeck2013exploration}. PRIME derives surprise minimization from information seeking, the minimizing phase emerging from the preference-free objective once information gain is exhausted.

In active-inference terms, the agent is preference-free, with uniform preference prior $p(o\mid C)$, so the pragmatic component of expected free energy is constant in $\pi$ and drops from the argmin. The intrinsic objective weights parameter information gain against expected transition entropy, a penalty that replaces ambiguity in fully observed MDPs (Section~\ref{sec:method}). Surprise minimization in the FEP sense follows from model resolution plus aleatoric avoidance, and the principle does not require the negative self-information reward from the surprise-minimization family.

We state the target as a preference-free Expected Free Energy in reward-maximization form, keeping information gain about dynamics parameters minus the expected transition entropy of next states, and realize it model-free in the restricted sense that no next-state predictor is fit or rolled out, the statistics living on counts, scalar return estimates (return space), and conditional variances of next-state features. A count-based pseudocount~\cite{bellemare2016unifying} carries the epistemic term, a frozen ensemble of distributional return-statistics heads provides its theoretical target frame, and a short-horizon gate in the rectified novelty-difference form of~\cite{zhang2021noveld} withholds the aleatoric penalty where near-term successors retain high epistemic value. Snapshotting all reward-defining objects on a fixed schedule fixes the intrinsic reward as a function of the transition tuple in each window.

We consider our core contribution to be the argument, and its realization as PRIME, that one preference-free objective yields both surprise directions. A single window-stationary reward produces surprise-maximizing behavior where dynamics are unresolved and surprise-minimizing behavior where they are resolved, with $\gamma$-contraction, uniform target bounds, and conditional uniform concentration inside each window (Section~\ref{sec:guarantees}), in the novelty lineage of active inference~\cite{friston2015epistemic,schwartenbeck2019novelty}. The focus of this study is the theoretical foundation. The experiments (Section~\ref{sec:evaluation}) probe the predicted direction-free behavior on two didactic environments, and are readily reproducible through an openly available repository \footnote{\url{https://github.com/rezafuru/PRIME}}.
Detailed comparative analysis is deferred to an extended study. 

\section{Background and related work}
\label{sec:background}

\subsection{Free-energy principle and EFE}

The free-energy principle characterizes self-organizing systems as minimizing variational free energy, an upper bound on the negative log-evidence of observations~\cite{friston2010fep}. Active inference~\cite{dacosta2020relationship,friston2017activeinference,smith2022aitutorial} selects policies that minimize Expected Free Energy (EFE):
\begin{equation}
\begin{aligned}
G(\pi) \;=\; &-\,\mathbb E_{q(o\mid\pi)}[\log p(o\mid C)]
        \;-\; \mathbb E_{q(o\mid\pi)} \KL[q(s\mid o,\pi)\,\Vert\,q(s\mid\pi)] \\
       &-\; \mathbb E_{q(o,s\mid\pi)} \KL[q(\theta\mid o,s,\pi)\,\Vert\,q(\theta)],
\end{aligned}
\label{eq:efe}
\end{equation}
the three terms being \emph{extrinsic value}, \emph{salience} (state information gain), and \emph{novelty} (parameter information gain)~\cite{dacosta2020relationship,schwartenbeck2019novelty}. A non-negative expected-evidence-bound term, vanishing under the variational approximation $q\approx p$, is omitted. PRIME keeps the \emph{novelty} term alone, the extrinsic term constant under a uniform preference prior and the salience term dropped by design, full observability leaving it at predictive state entropy, not zero. Section~\ref{sec:method} adds a transition-entropy penalty and realizes the objective without a learned dynamics model.

\subsection{Intrinsic motivation in RL}

\looseness=-1
Intrinsic-motivation methods in RL~\cite{aubret2022survey} include prediction error~\cite{burda2019rnd,pathak2017icm}, information gain over a learned forward model~\cite{caron2025bayesexplore,houthooft2016vime,pathak2019disagreement,sekar2020plan2explore}, and density-based pseudocount bonuses~\cite{bellemare2016unifying}. Prediction-error variants suffer the noisy-television failure, treating stochastic distractors as informative. AMA~\cite{mavorparker2022ama} counters by subtracting a learned aleatoric variance, in the service of curiosity alone. Ensemble-disagreement variants mitigate the failure by rewarding cross-member predictive variance rather than realized error, requiring a forward-model ensemble. The rectified novelty-difference bracket of NovelD~\cite{zhang2021noveld} underlies the gate in Section~\ref{sec:reward}, and DEIR~\cite{wan2023deir} uses a related pattern. PTS-BE~\cite{caron2025bayesexplore} is the model-based parallel to our return-space surrogate, realizing expected parameter information gain as Jensen--Shannon disagreement across a Bayesian forward-model ensemble.

\subsection{Surprise minimization and direction-adaptive methods}

SMIRL~\cite{berseth2021smirl} maintains a density model $p_\theta(s)$ over the within-episode state buffer and uses $\log p_{\theta_{t-1}}(s_t)$ as the intrinsic reward, operationalizing the FEP reading that organisms minimize long-term surprise as negative instantaneous self-information of state visitation. The dark-room problem~\cite{friston2012darkroom} makes the cost of that operationalization explicit. In the absence of preferences, minimizing the entropy of one's own state visitation draws the agent to inactive low-entropy configurations. Friston et al.\ dissolve the problem through phenotypic priors, which a preference-free reward does not have. SMIRL avoids collapse by relying on environments whose state distributions evolve under non-agent dynamics and by limiting deployment to ``unstable'' settings.
\looseness=-1
S-Adapt~\cite{hugessen2024sadapt} and Mixture of Surprises~\cite{zhao2022mixture} retain the surprise vocabulary and select the direction architecturally (Section~\ref{sec:intro}). Section~\ref{sec:method} addresses the same question by a single window-stationary reward whose dual-direction behavior emerges from the parameter information gain net of a weighted transition-entropy penalty, without arm or mixture switching. A separate line~\cite{castanyer2024improving} achieves stationarity by augmenting the state with online-updated sufficient statistics. We instead freeze the reward-defining tuple between window boundaries, admitting the within-window $\gamma$-contraction of Section~\ref{sec:guarantees}.

\section{Setting and didactic environments}
\label{sec:setting}

\looseness=-1
We consider episodic finite-action MDPs $(\Scal,\Acal,P,\gamma)$ without extrinsic reward and with $\gamma<1$. The agent should develop control policies that perform across environment components with different conditional entropies of next state given action, under one reward weighting and no operator-side choice of surprise direction.
We use the didactic pair of~\cite{hugessen2024sadapt}, two $10\times10$ grids with $100$-step episodes matching their small variants. \textbf{Butterflies} (high-entropy): the agent catches randomly moving butterflies. The transition is non-deterministic in the non-agent component (butterfly motion), while the agent retains control over its own position. Surprise-minimizing methods suit this task, while prediction-error curiosity chases the random butterfly motion. \textbf{Maze} (low-entropy): the agent navigates to a single exit on a static grid. 
Surprise minimization collapses to any stationary inactive policy under negative self-information, and information-gain methods are appropriate.
A single agent should succeed in both environments under the \emph{same} intrinsic objective and \emph{identical} hyperparameters.

\section{Method}
\label{sec:method}
In reward-maximization form, the preference-free target is the intrinsic expected free energy
\begin{equation}
U_{\IEFE}(s,a) \;=\; I(\theta;S'\mid s,a,D) \;-\; \lambda\,\mathbb E_\theta H(S'\mid s,a,\theta).
\label{eq:iefe}
\end{equation}
The negative $-U_{\IEFE}$ is the EFE-form cost, the novelty contribution of Eq.~\eqref{eq:efe} plus the weighted transition entropy, a correspondence of sign and retained terms, not an identity, Eq.~\eqref{eq:efe} being policy-level and $U_{\IEFE}$ state-action local. The first term is parameter information gain (novelty)~\cite{schwartenbeck2019novelty}, and the second is the transition-entropy penalty, a deliberate aleatoric term. Under full observability $p(o\mid s)=\delta(o-s)$ carries no ambiguity, so the only irreducible uncertainty left to penalize is that of the dynamics $p(s'\mid s,a,\theta)$. The term is our addition, absent from the standard EFE decomposition. The two terms are linked by the mutual-information identity of BALD~\cite{houlsby2011bald}, the epistemic--aleatoric decomposition~\cite{depeweg2018decomposition},
\begin{equation}
I(\theta;S'\mid s,a,D) \;=\; H(S'\mid s,a,D) \;-\; \mathbb E_\theta H(S'\mid s,a,\theta),
\label{eq:bald}
\end{equation}
so $U_{\IEFE}(s,a) = H(S'\mid s,a,D) - (1+\lambda)\,\mathbb E_\theta H(S'\mid s,a,\theta)$, the expected surprise of the next state minus $(1+\lambda)$ times its irreducible part. Maximizing the information-gain term seeks exactly the surprise the model can explain away and vanishes once the posterior predictive matches the transition kernel, after which the weighted aleatoric term alone orders actions (Proposition~\ref{thm:neutrality}).

\subsection{Architecture}
\label{sec:arch}

A \emph{statistics ensemble} $\{E_k\}_{k=1}^K$ holds QR-DQN~\cite{dabney2018qrdqn} distributional heads, each representing the per-action return distribution $Z_k(s,a)$ by $N_q$ learned quantile locations $q_{k,i}(s,a)$ in place of a scalar mean, with scalar logit $\Ebar_k(s,a) = \tfrac{1}{N_q}\sum_i q_{k,i}(s,a)$. A \emph{control ensemble} $\{Q_k\}_{k=1}^K$ supplies action-value functions for behavior. Statistics quantiles satisfy the hard bound $|q_{k,i}|\le G_q$, and control outputs are softly regularized toward $|Q_k|\le \Qmax$ (Appendix~\ref{app:exp}). Per-head affine calibration $\Ehat_k = a_k\Ebar_k + b_k$ to a \emph{reference head} $\Eref$ with $a_k\in[a_{\min},a_{\max}]$, $|b_k|\le b_{\max}$ gives $|\Ehat_k|\le G_E := a_{\max}G_q + b_{\max}$, removing per-head gauge freedom that would contaminate cross-head variance. The ${}^{-}$ superscript denotes frozen targets. A \emph{per-state baseline} $u(\cdot\mid s)\in\Delta(\Acal)$ centers calibrated logits,
\begin{equation}
\Etil_k^{-}(s,a) \;\coloneqq\; \Ehat_k^{-}(s,a) \;-\; \sum_{b\in\Acal} u(b\mid s)\,\Ehat_k^{-}(s,b), \qquad |\Etil_k^{-}|\le 2G_E.
\end{equation}
A \emph{reference policy} $\piref(a\mid s) \propto \exp(\Ebar_{\mathrm{ref}}^{-}(s,a)/\tau)$ with $\tau\ge\tau_{\min}>0$ satisfies $\|\piref(\cdot\mid s) - u(\cdot\mid s)\|_1 \ge \delta_\pi > 0$ on a set of states of positive visitation, decoupling reward measurement from policy improvement.

\paragraph{Statistics window.} \looseness=-1 Within a window all reward-defining objects are frozen, target heads $\{E_k^{-}\}$, $\Eref^{-}$, calibration $(a_k,b_k)$, baseline $u$, $\piref$, probe directions, neighbor weights, and hyperparameters $\lambda,\alpha,\tau,h_g,\beta,\sigma_0$, all statistics under stop-gradient.

\subsection{Epistemic estimator and pseudocount realization}
\label{sec:vepi}

Under the frozen $\piref$, define the reference-policy continuation and the return-space variable
\begin{equation}
g_k(s)\;\coloneqq\;\mathbb E_{a\sim\piref(\cdot\mid s)}\!\big[\Etil_k^{-}(s,a)\big],\qquad
Y_k\;\coloneqq\;\gamma\,g_k(S').
\end{equation}
Treating the head index $k$ as uniform on $\{1,\ldots,K\}$, the variance of $Y_k$ under the joint distribution $(k,S')\mid(s,a)$ decomposes via the law of total variance as
\begin{equation}
V_{\mathrm{epi,LoTV}}(s,a) \;=\; \Var_k\!\big(\mathbb E[Y_k\mid s,a]\big), \qquad
\Vale(s,a) \;=\; \mathbb E_k\!\big(\Var_{S'\mid s,a}[Y_k]\big).
\label{eq:lotv}
\end{equation}
The decomposition is algebraic, with no Bayesian posterior semantics claimed for $\{Y_k\}$. $V_{\mathrm{epi,LoTV}}$ is the cross-head disagreement on the return-space variable, the same signal used by ensemble-disagreement curiosity~\cite{caron2025bayesexplore,pathak2019disagreement,sekar2020plan2explore}, evaluated on a value-side ensemble under shared $\piref$ policy evaluation.

\looseness=-1
The value-side ensemble, in the $K$-head construction of~\cite{osband2016bootstrapped}, exhibits documented representation-space convergence under shared TD targets~\cite{sheikh2022diversity}. Randomized priors~\cite{osband2018randomized} address prior-variance but do not by themselves decorrelate the shared TD targets. Cross-head variance therefore underestimates information gain, and Section~\ref{sec:eval-gate} measures it near $10^{-7}$. Convergence collapses only this cross-head variance, while the within-head $\Vale$ of Eq.~\eqref{eq:lotv} survives and supplies the heads' aleatoric input to the penalty and gate of Section~\ref{sec:reward}. The epistemic estimator that enters the reward is a count-based realization,
\begin{equation}
V_{\mathrm{epi,count}}(s,a) \;=\; \kappa\,\gamma^2\,\tanh^2\!\big(1/\sqrt{N_{s,a}+1}\big),
\label{eq:vepi-count}
\end{equation}
with $N_{s,a}$ a cumulative count under an augmented bucket key over the controllable state (Appendix~\ref{app:noisytv}). The realization is a bounded estimator of parameter information gain via the chain inequality $\mathrm{IG}\le\mathrm{PG}\le 1/\hat N$ of~\cite{bellemare2016unifying}, prediction gain bounding information gain and the inverse pseudocount bounding prediction gain. The shape matches this chain, decaying as $1/N_{s,a}$ for large counts while staying smooth and bounded by $\kappa\gamma^2$. $V_{\mathrm{epi,LoTV}}$ stands as the theoretical target frame for $I(\theta;S'\mid s,a,D)$, and $V_{\mathrm{epi,count}}$ as the deployed realization (Appendix~\ref{app:estimators}). The architectural contribution is the window-frozen, contraction-stable composition of Section~\ref{sec:guarantees}, and estimator tightness is not claimed.

\paragraph{Probe-augmented aleatoric detector.} With frozen feature maps $\phi,\psi$ bounded by $\|\phi(s)\|_2\le B_\phi$ and $\|\psi(s)\|_2\le B_\psi$, sample $m$ unit-sphere probes $w_i\in\mathbb S^{d-1}$ at window start and define
\begin{equation}
\Valeprobe(s,a) \;=\; \frac1m\sum_{i=1}^m \Var_{S'\mid s,a}\!\big(\gamma\,w_i^{\top}\phi(S')\big),
\end{equation}
and analogously $\Valeraw$ on $\psi$. The per-probe estimator concentrates on $\gamma^2(\tr\Sigma_\phi)/d$ under uniform sampling on $\mathbb S^{d-1}$. The aleatoric-augmented term is $\Valea(s,a) = \max\{\Vale, \Valeprobe, \Valeraw\}$, taking the maximum of the three detectors.

\paragraph{Short-horizon epistemic gate.} Let $\beta\in(0,1]$ and $h_g\in\mathbb N$. With deterministic probe policy $\piprobe(s) = \arg\max_a \piref(a\mid s)$ frozen for snapshot stability,
\begin{equation}
\Vepiahead(s,a) \;\coloneqq\; \mathbb E\!\left[\sum_{j=1}^{h_g}\beta^{j-1}V_{\mathrm{epi}}\!\big(S_{t+j},\,\piprobe(S_{t+j})\big)\,\bigg|\,S_t=s,A_t=a\right],
\end{equation}
where $V_{\mathrm{epi}} = \max(V_{\mathrm{epi,LoTV}}, V_{\mathrm{epi,count}})$ takes the larger of the two estimates.

\subsection{Intrinsic reward}
\label{sec:reward}

The intrinsic reward is, within a window,
\begin{equation}
\Rint(s,a) \;=\; V_{\mathrm{epi}}(s,a) \;-\; \lambda\,\log\!\left(1 + \frac{[\Valea(s,a) - \alpha\,\Vepiahead(s,a)]_+}{\sigma_0^2}\right).
\label{eq:reward}
\end{equation}
\looseness=-1
The first term rewards epistemic novelty. The second term penalizes the aleatoric-augmented variance net of the short-horizon epistemic look-ahead under a rectifier. The log shape saturates with bounded slope, which the bounds and the preference inequality of Section~\ref{sec:guarantees} use, and $\sigma_0^2$ sets the variance scale below which noise is ignored. In states whose near future under $\piprobe$ passes through high-$V_{\mathrm{epi}}$ states the rectifier evaluates to zero and no penalty applies. In states whose near future carries no epistemic value the penalty is preserved. At $\lambda=0$ the reward degenerates to pure epistemic novelty. The bracket $[\Valea-\alpha\Vepiahead]_+$ shares the rectified novelty-difference form of NovelD~\cite{zhang2021noveld} with positions exchanged, the aleatoric estimator in the positive term and the look-ahead epistemic value as subtractor.

\subsection{Learning rules}

Statistics heads update via QR-DQN policy evaluation under $\piref$ with target $\zeta_k^{\mathrm{tgt}} = \bar r(s,a,s') + \gamma\,q_{k,j}^{-}(s',a'_{\piref})$, $a'_{\piref}\sim\piref$, minimizing the quantile Huber loss across the $N_q$ quantile locations (distributional Bellman operator $\Pi W_1 \Tref$). Here $\bar r$ is an admissible continuation bonus, bounded and frozen for the window (Section~\ref{sec:guarantees}), in the deployed configuration a count bonus on the augmented bucket key. Control critics learn by Double-DQN~\cite{vanhasselt2016ddqn} with $y^k = \Rint(s,a) + \gamma\,Q_k^{-}(s', \arg\max_{a'} Q_k(s',a'))$ and squared-error loss. The gradient through $\Rint$ is stopped, and only the control critics update from this loss.

\section{Within-window guarantees}
\label{sec:guarantees}

\looseness=-1
We state the main claims under a window-wise freeze of all objects in Section~\ref{sec:arch}, assuming throughout $\gamma\in(0,1)$, finite $\Scal,\Acal$ (or compact and measurable), the range and calibration constraints of Section~\ref{sec:arch}, and the non-triviality margin $\delta_\pi>0$ on a set of states of positive visitation. The margin is necessary because centering gives $g_k(s)=\sum_a[\piref(a\mid s)-u(a\mid s)]\,\Ehat_k^{-}(s,a)$, so $\piref=u$ would force $g_k\equiv0$ and $V_{\mathrm{epi,LoTV}}=\Vale=0$, collapsing the return-space signal. Lipschitz conditions are stated where used.
Propositions~\ref{thm:concentration} and~\ref{thm:neutrality} rely on a multi-page nonparametric argument and an optimization hypothesis we do not establish, and are stated \emph{conditional on the cited assumptions}.
Proofs of Lemma~\ref{lem:bounds} and Proposition~\ref{thm:neutrality} and a proof sketch of Proposition~\ref{thm:concentration} are in Appendix~\ref{app:proofs}. Appendix~\ref{app:scope} maps each result to its status in the deployed tabular system.

The window freeze recovers a standard within-window problem for the otherwise non-stationary $\Rint$ (Proposition~\ref{thm:stationarity}), the contraction that the target-shifting switches of Section~\ref{sec:intro} forgo. Lemma~\ref{lem:bounds} supplies the explicit constants, and Corollary~\ref{thm:targets} turns them into a window-constant ceiling on every TD target, checked empirically in Section~\ref{sec:eval-theory}. Proposition~\ref{thm:gate} quantifies how much epistemic advantage overrides a noise difference, the noisy-television defense in inequality form. Proposition~\ref{thm:concentration} states that the estimated reward the agent acts on concentrates on the defined reward, and Proposition~\ref{thm:neutrality} formalizes the emergence claim of Section~\ref{sec:intro}, with exploration vanishing on resolved support and only the aleatoric penalty remaining.

\begin{proposition}[Stationarity and contraction]
\label{thm:stationarity}
With all reward-defining objects of Section~\ref{sec:arch} frozen, $\Rint(s,a)$ is a fixed bounded function of $(s,a)$. The Bellman optimality operator $B^{\Rint}$ acting on bounded action-value functions $Q:\Scal\times\Acal\to\mathbb R$ by $(B^{\Rint}Q)(s,a) = \Rint(s,a) + \gamma\,\mathbb E_{s'}[\max_{a'} Q(s',a')]$ is a $\gamma$-contraction on $(\ell_\infty(\Scal\times\Acal),\|\cdot\|_\infty)$, with a unique fixed point.
\end{proposition}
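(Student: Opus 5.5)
The argument splits into two parts: first show that the frozen reward is a fixed bounded function, then run the standard Banach fixed-point argument for the Bellman optimality operator.

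\textbf{Step 1: boundedness and fixedness.} Once the window freezes every object listed in Section~\ref{sec:arch}, each ingredient of Eq.~\eqref{eq:reward} becomes a deterministic function of $(s,a)$ alone. These ingredients are the target heads, the calibration, the baseline $u$, $\piref$, $\piprobe$, the probes $w_i$, the feature maps, the counts $N_{s,a}$ as snapshotted at window start, and the hyperparameters. I will then bound each term in turn.
\begin{itemize}
\item By Eq.~\eqref{eq:vepi-count}, $0\le V_{\mathrm{epi,count}}\le\kappa\gamma^2$.
\item Since $|\Etil_k^{-}|\le 2G_E$, we get $|Y_k|\le 2\gamma G_E$. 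Hence both $V_{\mathrm{epi,LoTV}}$ and $\Vale$ are at most $4\gamma^2G_E^2$, because a variance is bounded by the squared range bound.
\item The probe detectors satisfy $\Valeprobe\le\gamma^2B_\phi^2$ and $\Valeraw\le\gamma^2B_\psi^2$, since $|w_i^\top\phi|\le B_\phi$ for unit $w_i$.
\end{itemize}
It follows that $V_{\mathrm{epi}}$, $\Valea$ and $\Vepiahead$ are all bounded and nonnegative. In particular $\Vepiahead\le h_g\max V_{\mathrm{epi}}$, since $\beta\le1$.

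The rectifier then places the argument of the logarithm in $[1,\,1+\overline V/\sigma_0^2]$, where $\overline V$ is the bound on $\Valea$. This gives
\[
-\lambda\log\bigl(1+\overline V/\sigma_0^2\bigr)\;\le\;\Rint\;\le\;\max V_{\mathrm{epi}} .
\]
Consequently $R_{\max}:=\|\Rint\|_\infty<\infty$. Measurability in the compact case follows because $\Rint$ is a composition of measurable maps with conditional expectations.

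\textbf{Step 2: $B^{\Rint}$ maps $\ell_\infty$ into itself.} For bounded $Q$, we have $\|B^{\Rint}Q\|_\infty\le R_{\max}+\gamma\|Q\|_\infty$.

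\textbf{Step 3: contraction.} Take bounded $Q_1,Q_2$ and any $s'$. The elementary inequality
\[
\bigl|\max_{a'}Q_1(s',a')-\max_{a'}Q_2(s',a')\bigr|\;\le\;\max_{a'}|Q_1-Q_2|(s',a')\;\le\;\|Q_1-Q_2\|_\infty
\]
holds pointwise. The reward term $\Rint(s,a)$ is identical in $B^{\Rint}Q_1$ and $B^{\Rint}Q_2$ because it does not depend on $Q$, so it cancels. Taking expectation over $s'\sim P(\cdot\mid s,a)$ and then the supremum over $(s,a)$ yields
\[
\|B^{\Rint}Q_1-B^{\Rint}Q_2\|_\infty\;\le\;\gamma\|Q_1-Q_2\|_\infty .
\]

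\textbf{Step 4: unique fixed point.} The space $(\ell_\infty(\Scal\times\Acal),\|\cdot\|_\infty)$ is complete. Banach's fixed-point theorem therefore gives a unique fixed point $Q^\star$, which satisfies $\|Q^\star\|_\infty\le R_{\max}/(1-\gamma)$.

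\textbf{Main obstacle.} Steps 2 to 4 are routine. The only real obstacle is Step 1, namely confirming that nothing inside $\Rint$ depends on quantities that move within the window. The counts $N_{s,a}$ are the suspicious item. I will read the freeze as covering the count snapshot used by the reward, and otherwise note that $\Rint$ remains uniformly bounded but is time-indexed. The look-ahead $\Vepiahead$ is an expectation under the fixed kernel $P$ and the frozen $\piprobe$, so it is a fixed function as well.
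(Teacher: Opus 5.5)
Your proof is correct and follows the paper's route. The paper likewise obtains $\|\Rint\|_\infty<\infty$ by citing Lemma~\ref{lem:bounds}, whose proof is essentially your Step~1; it then cancels the $Q$-independent reward term, applies the sup-of-max inequality to get the $\gamma$-contraction, and concludes with Banach's fixed-point theorem. Your self-map check, the fixed-point bound $R_{\max}/(1-\gamma)$, and your remark that the counts $N_{s,a}$ must be read as part of the window snapshot are sensible details that the paper leaves implicit.
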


\begin{proof}
Freezing makes $\Rint$ deterministic on $(s,a)$ with $\|\Rint\|_\infty<\infty$ by Lemma~\ref{lem:bounds}. The $\Rint$ term cancels in the difference, so the sup-of-max bound gives $\|B^{\Rint}Q-B^{\Rint}Q'\|_\infty\le\gamma\|Q-Q'\|_\infty$. Banach's fixed-point theorem completes.
\end{proof}

\begin{lemma}[Uniform bounds]
\label{lem:bounds}
Under the calibration constraints, $|\Etil_k^{-}|\le 2G_E$, $|g_k|\le 2G_E$, and $|Y_k|\le 2\gamma G_E$. Hence, with $B_{\mathrm{ale}}\coloneqq\max\{4\gamma^2 G_E^2,\gamma^2 B_\phi^2,\gamma^2 B_\psi^2\}$,
\begin{gather}
0\le V_{\mathrm{epi,LoTV}}+\Vale\le 4\gamma^2 G_E^2, \qquad \Valea\le B_{\mathrm{ale}}, \nonumber\\
-\lambda\log\!\big(1+B_{\mathrm{ale}}/\sigma_0^2\big)\;\le\;\Rint(s,a)\;\le\; \max\{4\gamma^2 G_E^2,\,\kappa\gamma^2\}.
\end{gather}
\end{lemma}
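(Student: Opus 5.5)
The plan is a direct chain of elementary inequalities. Each bound feeds the next, in the order the statement lists them.

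\emph{Logit bounds.} First, the calibration constraints give $|\Ehat_k^{-}|\le a_{\max}G_q+b_{\max}=G_E$. This uses $|q_{k,i}|\le G_q$, so the quantile average satisfies $|\Ebar_k|\le G_q$, together with $a_k\in[a_{\min},a_{\max}]$ and $|b_k|\le b_{\max}$; I assume here that $a_{\min}\ge 0$, i.e.\ $|a_k|\le a_{\max}$. Since $u(\cdot\mid s)\in\Delta(\Acal)$, the centering term $\sum_b u(b\mid s)\Ehat_k^{-}(s,b)$ is a convex combination of values in $[-G_E,G_E]$. The triangle inequality then gives $|\Etil_k^{-}|\le 2G_E$. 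Next, $g_k(s)$ is a $\piref$-average of such values, so $|g_k|\le 2G_E$. Multiplying by $\gamma$ gives $|Y_k|\le 2\gamma G_E$.

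\emph{Variance bounds.} The law of total variance in Eq.~\eqref{eq:lotv} gives
\[
V_{\mathrm{epi,LoTV}}+\Vale=\Var_{(k,S')\mid s,a}(Y_k).
\]
This is nonnegative, being a sum of a variance and an average of variances. It is at most $\mathbb E[Y_k^2]\le 4\gamma^2G_E^2$; Popoviciu's inequality on the interval $[-2\gamma G_E,2\gamma G_E]$ gives the same constant. Since both summands are nonnegative, each is individually at most $4\gamma^2G_E^2$.

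For the probe detectors, fix a unit vector $w_i$. Then $\Var(\gamma w_i^\top\phi(S'))\le \gamma^2\mathbb E[(w_i^\top\phi(S'))^2]\le\gamma^2B_\phi^2$ by Cauchy--Schwarz, and averaging over $i$ preserves the bound. The same argument gives $\Valeraw\le\gamma^2B_\psi^2$. Taking the maximum of the three detectors yields $0\le\Valea\le B_{\mathrm{ale}}$.

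\emph{Reward bounds.} For the upper bound, the rectifier is nonnegative and $\lambda\ge0$, so the log term is nonnegative and $\Rint\le V_{\mathrm{epi}}$. Moreover $V_{\mathrm{epi}}=\max(V_{\mathrm{epi,LoTV}},V_{\mathrm{epi,count}})$, where $V_{\mathrm{epi,LoTV}}\le4\gamma^2G_E^2$ from the previous step and $V_{\mathrm{epi,count}}\le\kappa\gamma^2$ because $\tanh^2\le1$. This gives $\Rint\le\max\{4\gamma^2G_E^2,\kappa\gamma^2\}$.

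For the lower bound, $V_{\mathrm{epi}}\ge0$. The look-ahead $\Vepiahead$ is an expectation of nonnegative terms with $\beta>0$, so it is also nonnegative. With $\alpha\ge0$, the bracket satisfies $[\Valea-\alpha\Vepiahead]_+\le\Valea\le B_{\mathrm{ale}}$. Monotonicity of $\log(1+x/\sigma_0^2)$ then gives the stated lower bound.

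\emph{Main obstacle.} There is no real analytic obstacle. The only delicate point is bookkeeping of implicit sign conventions: $\lambda,\alpha,\kappa\ge0$, $a_k\ge0$, and $u,\piref$ being genuine probability vectors. Each must be stated explicitly, because the monotonicity steps depend on them.
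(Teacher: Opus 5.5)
Your proposal is correct and follows the paper's proof essentially step for step: centering at most doubles the $G_E$ bound, the law of total variance plus a range or second-moment bound gives $4\gamma^2G_E^2$, the probe detectors are bounded through $|w_i^\top\phi(S')|\le B_\phi$, and the reward bounds come from $\tanh^2\le 1$ and monotonicity of $\log(1+\cdot)$. Using $\Var(X)\le\mathbb E[X^2]$ in place of Popoviciu gives the same constants, and your explicit sign conditions ($a_{\min}\ge 0$, $\lambda,\alpha\ge 0$, $\Vepiahead\ge 0$) make precise what the paper leaves implicit.
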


\begin{corollary}[Bounded learning targets]
\label{thm:targets}
If additionally $|Q_k^{-}|\le\Qmax$ and $\|\bar r\|_\infty\le R_0$, then $|\zeta_k^{\mathrm{tgt}}|\le R_0+\gamma G_q$ and $|y^k|\le R_r+\gamma\Qmax$ with $R_r\coloneqq \max\{\max\{4\gamma^2 G_E^2,\kappa\gamma^2\},\,\lambda\log(1+B_{\mathrm{ale}}/\sigma_0^2)\}$.
\end{corollary}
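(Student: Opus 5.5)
The plan is a direct triangle-inequality argument applied to each of the two targets separately. Both reduce to sup-norm bounds that are either hypotheses of the corollary or constants already supplied by Lemma~\ref{lem:bounds}. No contraction or fixed-point argument is needed: the claim concerns single targets formed from frozen objects, not iterates.

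For the statistics target $\zeta_k^{\mathrm{tgt}}=\bar r(s,a,s')+\gamma\,q_{k,j}^{-}(s',a'_{\piref})$, I bound the two summands separately. The continuation bonus satisfies $|\bar r|\le\|\bar r\|_\infty\le R_0$ by hypothesis; it is frozen for the window, so this is a fixed bound. The frozen quantile location obeys the hard range constraint $|q_{k,i}|\le G_q$ of Section~\ref{sec:arch}. This constraint holds for every head, every quantile index and every $(s',a')$, so it holds in particular at the random pair $(s',a'_{\piref})$ with $a'_{\piref}\sim\piref$, whatever that sampled action is. The triangle inequality then gives $|\zeta_k^{\mathrm{tgt}}|\le R_0+\gamma G_q$.

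For the control target $y^k=\Rint(s,a)+\gamma\,Q_k^{-}(s',a^\star)$ with $a^\star=\arg\max_{a'}Q_k(s',a')$, I use the hypothesis $|Q_k^{-}|\le\Qmax$ pointwise. This holds at any action, so the greedy selection by the \emph{online} critic (the Double-DQN decoupling) plays no role. For the reward term, Lemma~\ref{lem:bounds} gives the two-sided bound
\[
-\lambda\log(1+B_{\mathrm{ale}}/\sigma_0^2)\le\Rint\le\max\{4\gamma^2G_E^2,\kappa\gamma^2\}.
\]
It follows that $|\Rint|$ is at most the larger of the upper constant and the magnitude of the lower constant. That larger value is exactly $R_r$, with $\lambda\ge0$ making the lower constant nonpositive. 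Adding the two pieces yields $|y^k|\le R_r+\gamma\Qmax$.

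The only step requiring care is converting the asymmetric two-sided bound of Lemma~\ref{lem:bounds} into an absolute-value bound. A bound $a\le x\le b$ with $a\le0\le b$ implies $|x|\le\max\{b,-a\}$. The upper constant is nonnegative, and the lower one is nonpositive because $\lambda\ge0$ and $B_{\mathrm{ale}}\ge0$, so this applies directly. A second point to note is that $\Qmax$ is only softly enforced in deployment. The corollary therefore takes $|Q_k^{-}|\le\Qmax$ as an explicit hypothesis rather than deriving it, and the proof simply invokes it.
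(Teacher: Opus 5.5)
Your proposal is correct and is the paper's own argument: the triangle inequality on each target, using $\|\bar r\|_\infty\le R_0$ with the hard bound $|q_{k,j}^{-}|\le G_q$ for $\zeta_k^{\mathrm{tgt}}$, and $\|\Rint\|_\infty\le R_r$ from Lemma~\ref{lem:bounds} with $|Q_k^{-}|\le\Qmax$ for $y^k$. Your conversion of the two-sided bound of Lemma~\ref{lem:bounds} into $|\Rint|\le R_r$ (using $\lambda\ge0$) spells out a step the paper leaves implicit.
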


\begin{proof}
$|\zeta_k^{\mathrm{tgt}}|\le\|\bar r\|_\infty+\gamma\,|q_{k,j}^{-}|\le R_0+\gamma G_q$. Lemma~\ref{lem:bounds} gives $\|\Rint\|_\infty\le R_r$, so $|y^k|\le\|\Rint\|_\infty+\gamma\Qmax\le R_r+\gamma\Qmax$ by the triangle inequality.
\end{proof}

\paragraph{Admissibility class.} Call a continuation bonus $\bar r$ \emph{admissible} if $\|\bar r\|_\infty\le R_0$ and $\bar r$ is frozen for the window duration. Propositions~\ref{thm:stationarity} and~\ref{thm:gate}--\ref{thm:neutrality} and Corollary~\ref{thm:targets} hold for every admissible instance with the same constants. Instances include the head-empirical decomposition of Eq.~\eqref{eq:lotv}, the count-based realization of Eq.~\eqref{eq:vepi-count}~\cite{bellemare2016unifying}, a snapshot-disciplined RND bonus with the predictor updated only at window boundaries, and a forward-model side ensemble~\cite{caron2025bayesexplore,sekar2020plan2explore} with frozen per-$(s,a)$ regression targets. Lidayan et al.~\cite{lidayan2024bamdp} prove policy-invariance and finite-horizon regret theorems for the narrower class of BAMDP potential-based shaping functions, whereas ours admits any frozen bounded $\bar r$, with within-window operator-level guarantees in place of policy-level ones.

\begin{proposition}[Quantitative gate-mediated preference]
\label{thm:gate}
For two actions $a_1,a_2$ at state $s$, let $\Delta_{\mathrm{epi}}\coloneqq V_{\mathrm{epi}}(s,a_1)-V_{\mathrm{epi}}(s,a_2)$ and $x_i\coloneqq[\Valea(s,a_i)-\alpha\Vepiahead(s,a_i)]_+/\sigma_0^2\ge 0$. WLOG $x_1\ge x_2$. A sufficient condition for $\Rint(s,a_1)\ge\Rint(s,a_2)$ is
\begin{equation}
\Delta_{\mathrm{epi}} \;\ge\; \lambda\,\frac{x_1-x_2}{1+\min\{x_1,x_2\}}.
\label{eq:gate-suff}
\end{equation}
\end{proposition}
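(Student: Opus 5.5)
We will unfold the definition of the reward in Eq.~\eqref{eq:reward} and reduce the claimed preference to a one-dimensional inequality for the logarithm. With $x_i$ as in the statement, the reward is $\Rint(s,a_i)=V_{\mathrm{epi}}(s,a_i)-\lambda\log(1+x_i)$. Hence
\[
\Rint(s,a_1)-\Rint(s,a_2)=\Delta_{\mathrm{epi}}-\lambda\big[\log(1+x_1)-\log(1+x_2)\big].
\]
So $\Rint(s,a_1)\ge\Rint(s,a_2)$ holds exactly when $\Delta_{\mathrm{epi}}\ge\lambda\log\frac{1+x_1}{1+x_2}$. It therefore suffices to show that the right-hand side of Eq.~\eqref{eq:gate-suff} dominates this log ratio. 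That requires $\lambda\ge0$, which we take as implicit since $\lambda$ is a penalty weight.

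The key step is the elementary bound $\log(1+t)\le t$ for $t>-1$. We apply it with $t=\frac{x_1-x_2}{1+x_2}$, which is nonnegative because of the WLOG ordering $x_1\ge x_2\ge0$. This gives
\[
\log\frac{1+x_1}{1+x_2}=\log\!\Big(1+\frac{x_1-x_2}{1+x_2}\Big)\le\frac{x_1-x_2}{1+x_2}.
\]
Since $x_2=\min\{x_1,x_2\}$, the last expression is exactly $\frac{x_1-x_2}{1+\min\{x_1,x_2\}}$. Multiplying by $\lambda\ge0$ and chaining with the hypothesis gives
\[
\Delta_{\mathrm{epi}}\ge\lambda\frac{x_1-x_2}{1+\min\{x_1,x_2\}}\ge\lambda\log\frac{1+x_1}{1+x_2},
\]
which is the required inequality.

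An equivalent route is the mean value theorem for $\log(1+x)$ on $[x_2,x_1]$: the derivative $1/(1+\xi)$ is at most $1/(1+x_2)$ there. We mention it because it explains the remark about the bounded slope of the log shape in Section~\ref{sec:reward}. The edge case $x_1=x_2$ is trivial, since the condition becomes $\Delta_{\mathrm{epi}}\ge0$.

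There is no real obstacle. The only care points are the sign bookkeeping, which relies on $\lambda\ge0$ and on the WLOG ordering making $\min\{x_1,x_2\}=x_2$, and noting that the rectifier guarantees $x_i\ge0$ so the logarithms are well defined. Lemma~\ref{lem:bounds} is not needed beyond finiteness of all quantities involved.
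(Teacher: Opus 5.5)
Your proposal is correct and matches the paper's proof: both reduce $\Rint(s,a_1)\ge\Rint(s,a_2)$ to $\Delta_{\mathrm{epi}}\ge\lambda[\log(1+x_1)-\log(1+x_2)]$ and bound the log difference by $(x_1-x_2)/(1+x_2)$, the paper via the mean-value theorem (your stated alternative) and you via the equivalent concavity bound $\log(1+t)\le t$. Your explicit note that $\lambda\ge 0$ is required is a fair tightening of an assumption the paper leaves implicit.
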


\begin{proof}
$\Rint(s,a_1)\ge\Rint(s,a_2)$ iff $\Delta_{\mathrm{epi}}\ge\lambda[\log(1+x_1)-\log(1+x_2)]$. The mean-value theorem on $\log(1+\cdot)$ gives $\log(1+x_1)-\log(1+x_2)\le(x_1-x_2)/(1+\min\{x_1,x_2\})$ since $1/(1+x)$ is decreasing.
\end{proof}

\begin{proposition}[Uniform-in-window concentration, conditional]
\label{thm:concentration}
Assume (i) $g_k$ and the conditional-variance function $s'\mapsto\sigma_k^2(s')$ are $L$-Lipschitz in $s'$ (Lipschitzness of $g_k$ follows from Lipschitz statistics heads and $\tau\ge\tau_{\min}>0$, Section~\ref{sec:arch}; together with $|Y_k|\le 2\gamma G_E$ from Lemma~\ref{lem:bounds} this controls the variance estimator); (ii) kernel neighborhoods with bandwidth $h\to 0$ and effective neighbor count $Mh^d\to\infty$, with $M$ the number of frozen snapshots in the window and $d$ the embedding dimension; (iii) the snapshot stream is $\beta$-mixing with summable coefficients; (iv) capacity control $\log N_{\mathrm{buck}}=o(Mh^d)$, where $N_{\mathrm{buck}}$ is the covering number of $\Scal\times\Acal$ at scale $h$. With $\mu_k(s,a)\coloneqq\mathbb E[Y_k\mid s,a]$ and $\sigma_k^2(s,a)\coloneqq\Var_{S'\mid s,a}[Y_k]$ the per-bucket conditional mean and variance of $Y_k$, and $\widehat\mu_k,\widehat\sigma_k^2$ their kernel estimators, uniformly over realized buckets,
\begin{equation}
\sup_{(s,a)}|\widehat\mu_k-\mu_k|,\;\sup_{(s,a)}|\widehat\sigma_k^2-\sigma_k^2| \;=\; O_p\!\left(\sqrt{\tfrac{\log N_{\mathrm{buck}}}{Mh^d}}+h\right),
\end{equation}
and consequently $\Rint$ concentrates uniformly within the window.
\end{proposition}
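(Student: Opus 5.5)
The plan is the standard bias--variance decomposition for Nadaraya--Watson type estimators, made uniform by a covering argument and a blocking reduction from $\beta$-mixing to independence. For each frozen head $k$ and query $(s,a)$, I write the kernel estimator as $\widehat\mu_k(s,a)=\sum_j W_j(s,a)\,Y_{k,j}$, where the weights $W_j$ are the frozen neighbor weights normalized to sum to one. I then split the error as
\[
\widehat\mu_k-\mu_k=\underbrace{\textstyle\sum_j W_j\big(Y_{k,j}-\mu_k(s_j,a_j)\big)}_{\text{stochastic}}+\underbrace{\textstyle\sum_j W_j\big(\mu_k(s_j,a_j)-\mu_k(s,a)\big)}_{\text{bias}} .
\]
The bias term is handled by assumption (i): $\mu_k$ inherits Lipschitzness from $g_k$, since $\mu_k=\gamma\,\mathbb E[g_k(S')\mid s,a]$ with the kernel Lipschitz in the embedding. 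Because the weights are supported on a ball of radius $O(h)$, this gives bias $O(Lh)$ uniformly. For the variance estimator I would write $\widehat\sigma_k^2=\widehat{m}_{2,k}-\widehat\mu_k^2$, with $\widehat m_{2,k}$ the kernel estimate of $\mathbb E[Y_k^2\mid s,a]$. Both pieces are bounded by Lemma~\ref{lem:bounds}, since $|Y_k|\le 2\gamma G_E$, and $|\widehat\mu_k^2-\mu_k^2|\le 4\gamma G_E|\widehat\mu_k-\mu_k|$. So it suffices to control two bounded conditional-mean estimators, and the same argument treats both.

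For the stochastic term at a fixed bucket, I would use the Berbee/Yu blocking technique under (iii). First, partition the $M$ snapshots into alternating blocks of length $\ell$. Second, replace the odd blocks by independent copies, paying $(M/\ell)\beta(\ell)$ in total variation. Third, apply Bernstein's inequality to the block sums. The summands are bounded by $2\gamma G_E\,\max_j W_j=O(1/(Mh^d))$ and the variance proxy is $O(1/(Mh^d))$, where the effective count $Mh^d\to\infty$ comes from (ii). This yields a tail of order $\exp(-c\,Mh^d t^2)$ for $t\lesssim1$. Summability of the $\beta$ coefficients lets me choose $\ell=\ell_M\to\infty$ slowly, so that both the coupling error and the block-length inflation of the variance are negligible. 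The denominator of the Nadaraya--Watson weights, namely the local mass, is treated the same way. It is bounded below with high probability because the effective neighbor count diverges.

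For uniformity, I take an $h$-net of $\Scal\times\Acal$ of size $N_{\mathrm{buck}}$ and apply a union bound over net points, choosing $t\asymp\sqrt{\log N_{\mathrm{buck}}/(Mh^d)}$. Assumption (iv) makes this $t\to0$ and the total failure probability vanish. Off-net points are handled by the Lipschitz continuity of the kernel weights and of $\mu_k$, which adds only $O(h)$. Combining the bias and stochastic pieces gives the displayed $O_p$ rate for both $\widehat\mu_k$ and $\widehat\sigma_k^2$, and taking a maximum over the finitely many $k$ keeps the rate.

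The consequence for $\Rint$ follows from Lipschitz composition. $V_{\mathrm{epi,LoTV}}$ and $\Vale$ are smooth functions of $\{\mu_k,\sigma_k^2\}_k$ on a bounded domain, and $\max$ and $[\cdot]_+$ are $1$-Lipschitz. The map $x\mapsto\lambda\log(1+x/\sigma_0^2)$ has slope at most $\lambda/\sigma_0^2$. The look-ahead $\Vepiahead$ is a finite $\beta$-discounted sum of $V_{\mathrm{epi}}$ under frozen $\piprobe$, so its error is at most $\sum_{j}\beta^{j-1}$ times the uniform error. I would assume the probe detectors concentrate by the same argument, since they are bounded by Lemma~\ref{lem:bounds}, and note that the count term is exact.

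I expect the main obstacle to be the blocking step combined with data-dependent neighbor weights and the denominator. Mixing is assumed for the snapshot stream, but the local sample size is random, so the proof must condition carefully. My first attempt would be to prove concentration for the numerator and the denominator separately as kernel sums with deterministic bandwidth, and then take their ratio on the high-probability event where the denominator is bounded below by a constant times $Mh^d$.
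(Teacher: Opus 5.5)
Your route is the one the paper's sketch names: Yu-type blocking, a covering-number union bound, and a Lipschitz bias term. Writing $\widehat\sigma_k^2=\widehat m_{2,k}-\widehat\mu_k^2$ is a legitimate substitute for the U-statistic decoupling the paper invokes, since it reduces everything to bounded conditional-mean estimators. The paper's proof is only a citation to this machinery, so the details are what you must supply, and three of them fail as written.

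First, the off-net step. Kernel weights at bandwidth $h$ are Lipschitz in the query point only with constant of order $1/h$. Moving the query by $\epsilon$ therefore changes the stochastic term by $O_p(\epsilon/h)$, which on an $h$-net is $O_p(1)$, not $O(h)$. The bias term $\sum_j W_j\big(\mu_k(s_j,a_j)-\mu_k(s,a)\big)$ needs no net at all. You have two repairs:
\begin{itemize}
\item use a net at scale $h^2$, with (iv) imposed at that scale; this costs only a constant factor in $\log N_{\mathrm{buck}}$ when covering numbers are polynomial in $1/h$;
\item take the estimator to be a partition (bucket) estimator, constant on cells. The supremum is then a maximum over $N_{\mathrm{buck}}$ cells, which matches ``uniformly over realized buckets'' and removes the off-net step.
\end{itemize}

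Second, the bias bound needs $\mu_k$ Lipschitz in the conditioning variable $(s,a)$, whereas (i) only makes $g_k$ Lipschitz in $s'$. The inheritance you invoke is really an extra hypothesis: $(s,a)\mapsto P(\cdot\mid s,a)$ must be Lipschitz in $W_1$, giving constant $\gamma L L_P$. A discontinuous transition map makes $\mu_k$ discontinuous even for Lipschitz $g_k$, so state this assumption explicitly.

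Third, and most seriously, summable $\beta$-mixing coefficients do not let you take $\ell_M\to\infty$ slowly. There are three separate problems:
\begin{itemize}
\item Berbee's coupling costs $(M/\ell_M)\beta(\ell_M)$, and summability gives only $\beta(\ell)=o(1/\ell)$. For $\beta(j)=j^{-2}$ the cost vanishes only if $\ell_M\gg M^{1/3}$.
\item Each block sum has range $O(\ell_M/(Mh^d))$. At $t\asymp\sqrt{\log N_{\mathrm{buck}}/(Mh^d)}$, Bernstein's linear term is therefore negligible only if $\ell_M^2\log N_{\mathrm{buck}}=O(Mh^d)$.
\item The block variance stays of independent order only if within-block covariances are controlled, e.g.\ by bounded joint densities of snapshot pairs plus a decay rate. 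The crude bound loses a factor $\sqrt{\ell_M}$ in the rate.
\end{itemize}
So this step needs (iii) strengthened, say to geometric $\beta$-mixing with $\ell_M\asymp\log M$, together with that bandwidth condition and a covariance bound.

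A cleaner repair is available for the numerator, provided the snapshots are successive transitions and the frozen $g_k$ does not depend on them. Then $Y_{k,j}-\mu_k(s_j,a_j)$ and $Y_{k,j}^2-\mathbb E[Y_k^2\mid s_j,a_j]$ are bounded martingale differences for the history filtration. Freedman's inequality with the union bound then controls the stochastic term with no mixing rate at all. Mixing is needed only to show that the local masses are of order $Mh^d$, which your ratio argument for the denominator requires anyway.
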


\begin{proposition}[Asymptotic neutrality, conditional]
\label{thm:neutrality}
Suppose realizability and coverage hold and each statistics head converges (in the hypothesis class) to the unique fixed point of the projected distributional operator $\Pi W_1\Tref$~\cite{bellemare2017distributional,dabney2018qrdqn}. Then $V_{\mathrm{epi,LoTV}}(s,a)\to 0$ on the visited support, $V_{\mathrm{epi,count}}\to 0$ as $N_{s,a}\to\infty$, and
\begin{equation}
\Rint(s,a)\;\to\;-\lambda\,\log\!\big(1+\Valea(s,a)/\sigma_0^2\big).
\end{equation}
\end{proposition}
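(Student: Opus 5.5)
The argument splits into three limits, one for each moving part of $\Rint$ in Eq.~\eqref{eq:reward}: the cross-head term $V_{\mathrm{epi,LoTV}}$, the count term $V_{\mathrm{epi,count}}$, and the gate $\Vepiahead$. Once all three vanish, the claimed limit follows from continuity of $\Rint$ in these arguments, with $\Valea$ held fixed.

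The count term is immediate from Eq.~\eqref{eq:vepi-count}. Since $\tanh(x)\le x$ for $x\ge 0$, we have $0\le V_{\mathrm{epi,count}}(s,a)\le \kappa\gamma^2/(N_{s,a}+1)$, which tends to $0$ as $N_{s,a}\to\infty$.

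For the cross-head term we use the hypothesis on the heads. Each head converges to the unique fixed point $\eta^\ast$ of $\Pi W_1\Tref$. This operator is a $\gamma$-contraction in $\bar W_\infty$, projected in the QR-DQN sense of~\cite{dabney2018qrdqn}. All heads share the same target operator, since $\piref$ and $\bar r$ are frozen, so every head has the same limit $q_{k,i}\to q^\ast_i$, and hence $\Ebar_k\to\Ebar^\ast$. We still need the calibrated, centered logits $\Etil_k^{-}$ to agree across heads. Here we use realizability. Because the reference head obeys the same fixed-point equation, the affine fit of $\Ebar_k$ to $\Eref$ has the exact solution $(a_k,b_k)=(1,0)$. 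That solution lies inside the calibration box, so the calibration map is identical for every head in the limit.

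It follows that $g_k(s')\to g^\ast(s')$ for every $k$ and every $s'$ on the visited support. By Lemma~\ref{lem:bounds}, $|Y_k|\le 2\gamma G_E$, so dominated convergence gives $\mathbb E[Y_k\mid s,a]\to\gamma\,\mathbb E[g^\ast(S')\mid s,a]$ for all $k$. The variance over heads of quantities with a common limit tends to $0$, so $V_{\mathrm{epi,LoTV}}(s,a)\to 0$. Coverage is what makes this hold at every visited $(s,a)$, because the fixed-point identity has to hold on the successors $S'$ that are actually reached. I expect the main obstacle to be this step: showing that convergence of the quantile representation forces agreement of the calibrated and centered logits, uniformly enough to pass through the expectation. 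I would handle it with the boundedness from Lemma~\ref{lem:bounds} and the frozen baseline $u$.

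With both estimators vanishing, $V_{\mathrm{epi}}=\max(V_{\mathrm{epi,LoTV}},V_{\mathrm{epi,count}})\to 0$ on the support. The gate $\Vepiahead$ is a finite sum of $h_g$ terms, each a bounded $V_{\mathrm{epi}}$ evaluated at successor states reached under $\piprobe$. Those successors lie in the visited support by coverage, and each term is bounded by $\max\{4\gamma^2G_E^2,\kappa\gamma^2\}$. Bounded convergence therefore gives $\Vepiahead\to 0$.

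Finally, we pass to the limit in $\Rint$. The map $(v,x)\mapsto v-\lambda\log(1+[x]_+/\sigma_0^2)$ is continuous, and since $\Valea\ge 0$ we have $[\Valea-\alpha\Vepiahead]_+\to\Valea$. Hence $\Rint(s,a)\to-\lambda\log(1+\Valea(s,a)/\sigma_0^2)$, where $\Valea$ is taken at its limiting value. Its within-head part $\Vale$ converges to $\mathbb E_k\Var[\gamma g^\ast(S')]$, which generally does not vanish.
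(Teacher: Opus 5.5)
Your proposal is correct and follows the same route as the paper's proof. The count term is handled by direct calculation, $V_{\mathrm{epi,LoTV}}\to0$ comes from all heads sharing the fixed point of $\Pi W_1\Tref$, $\Vepiahead\to0$ follows from $\Vepi\to0$ on the visited support, and the limit is then substituted into Eq.~\eqref{eq:reward}. Your check that the calibrations agree across heads fills in the paper's one-line ``shared-target convergence aligns the conditional means''. That check is genuinely needed, because centering cancels $b_k$ but not $a_k$, so heads with a common limit $\Ebar^\ast$ and unequal $a_k$ would still disagree.

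One small tightening: $\Valea$ need not converge, since the probes are resampled each window and $\phi$ is learned. So instead of appealing to continuity at a limiting $\Valea$, use that $x\mapsto\log(1+[x]_+/\sigma_0^2)$ is $1/\sigma_0^2$-Lipschitz. This gives $\bigl|\Rint+\lambda\log(1+\Valea/\sigma_0^2)\bigr|\le\Vepi+\lambda\alpha\Vepiahead/\sigma_0^2\to0$ directly.
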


\paragraph{Cost of freezing.} Across windows $\Rint$ is piecewise stationary, so refresh events may shift it even when the environment is unchanged. Within a window the drift of the acting policy from $\piref$ biases $g_k$ by at most $O(\gamma G_E\|\pi_{\mathrm{act}}-\piref\|_1)$. Window length controls a bias--variance trade-off.

\section{Evaluation}
\label{sec:evaluation}

\subsection{Setup}
\label{sec:eval-setup}

\looseness=-1
The construction of Section~\ref{sec:method} is evaluated on the two didactic environments of Section~\ref{sec:setting}. The PRIME configuration is fixed at $K=5$ statistics heads, $\gamma=0.9$, gate weight $\alpha=0.5$, aleatoric weight $\lambda=0.5$, and per-seed budgets of $250{,}000$ env-steps on Butterflies and $200{,}000$ on Maze. All reward-defining hyperparameters and clip ceilings are identical across environments. Five baselines run at matched budget under their published defaults, held fixed across both environments, an analytical random walker, SMIRL~\cite{berseth2021smirl}, RND~\cite{burda2019rnd}, Disagreement~\cite{pathak2019disagreement}, and an Extrinsic-DQN reference upper bound, with $n=15$ seeds per setup. Full architectures and hyperparameters are in Appendix~\ref{app:exp}, statistical comparisons in Appendix~\ref{app:stats}. 
\subsection{Direction-free behavior}
\label{sec:eval-h1}

\begin{figure}[t]
\centering
\includegraphics[width=0.93\linewidth]{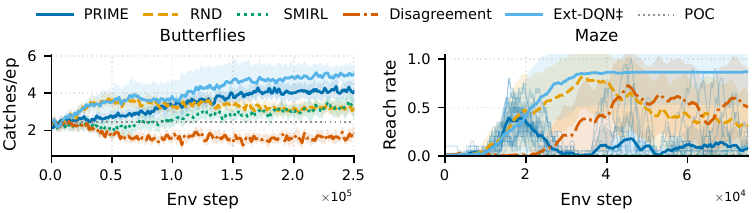}
\caption{Per-method learning curves, rolling-$20$ means of task reward, shaded $\pm1$ std ($n=15$ seeds). POC is the position-only comparator (Section~\ref{sec:eval-h1}), Ext-DQN$^{\ddagger}$ the supervised reference trained on the task reward, the Maze panel cropped at $75$k steps.}
\label{fig:f1}
\end{figure}

\begin{table}[t]
\setlength{\belowcaptionskip}{4pt}
\centering
\caption{Point estimates $\pm 1$ std ($n=15$ seeds, setup of Section~\ref{sec:eval-setup}).%
}
\label{tab:main}
\footnotesize
\setlength{\tabcolsep}{3pt}
\begin{tabular}{lccc}
\toprule
Method & Butterflies catch/ep & Butterflies peak & Maze peak \\
\midrule
Random walker        & $2.29 \pm 1.21$ & ---             & $0.00$           \\
SMIRL                & $2.71 \pm 0.21$ & $3.94 \pm 0.38$ & $0.01 \pm 0.02$  \\
Disagreement         & $1.72 \pm 0.11$ & $2.99 \pm 0.13$ & $1.00 \pm 0.00$  \\
RND                  & $3.26 \pm 0.14$ & $4.32 \pm 0.21$ & $1.00 \pm 0.00$  \\
Ext-DQN$^{\ddagger}$ & $4.11 \pm 0.67$ & $5.66 \pm 0.32$ & $0.93 \pm 0.26$  \\
\textbf{PRIME}       & $3.47 \pm 0.10$ & $5.01 \pm 0.14$ & $0.79 \pm 0.11$  \\
\bottomrule
\end{tabular}
\end{table}

\begin{figure}[t]
\centering
\includegraphics[width=0.93\linewidth]{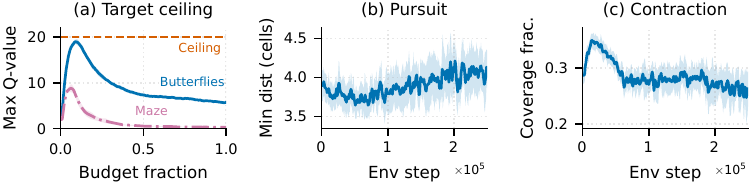}
\caption{Ceiling check and Butterflies behavior for PRIME ($n=15$ seeds, rolling-$20$ means, shaded $\pm1$ std). (a) Empirical $|q_{\mathrm{taken}}|_{\max}$ against the Corollary~\ref{thm:targets} ceiling $R_r+\gamma\Qmax$. (b) Agent--butterfly L1 min-distance. (c) Per-episode coverage fraction.}
\label{fig:f3f4}
\end{figure}

\looseness=-1
Figure~\ref{fig:f1} reports per-method learning curves, and Table~\ref{tab:main} the aggregates. On Butterflies, PRIME averages $3.47\pm0.10$ catches per episode, $1.42\times$ the position-only comparator on the same run-average metric ($2.45$, simulated for policies blind to butterfly positions), with peak rolling-$20$ $5.01\pm0.14$. On Maze, peak rolling-$20$ reach is $0.79\pm0.11$ with per-seed peaks $0.65$--$1.00$. The single-direction baselines each fail their opposite environment. SMIRL collapses on Maze ($0.01\pm0.02$) by the dark-room mechanism of Section~\ref{sec:background}, and Disagreement degrades on Butterflies below the random walker ($1.72$ vs $2.29$). RND attains both under its published defaults (Table~\ref{tab:main}, Section~\ref{sec:discussion}). PRIME attains both from one window-stationary objective under a single configuration shared across the two environments, the two directions following from the parameter-information-gain target of Section~\ref{sec:method}. Late-training reach-rate collapse on Maze is shared by every intrinsic method (PRIME last rolling-$20$ $0.04\pm0.07$), consistent with Proposition~\ref{thm:neutrality} on covered support, and the Extrinsic-DQN reference sustains at $1.00$ on $14/15$ seeds.
\label{sec:eval-theory}%
\looseness=-1
Corollary~\ref{thm:targets} bounds the control-critic targets by the window-constant $R_r+\gamma Q_{\max}$, with $R_r$ at the deployed reward clip $2.0$ (Table~\ref{tab:hp-prime}). Figure~\ref{fig:f3f4}a plots the empirical $|q_{\mathrm{taken}}|_{\max}$ trajectory against this ceiling. It tracks the soft anchor $\Qmax{=}20$ (one Butterflies seed's rolling-$20$ at $20.06$), and the bounded targets stay below the ceiling on every seed (tail-$20\%$ $|y^k|\in[6.48,7.00]$ on Butterflies, $[0.25,0.38]$ on Maze).

\subsection{Per-environment behavioral evidence}
\label{sec:eval-butter}

\looseness=-1
\textbf{Butterflies.} PRIME operates as a novelty-seeker in the $(s,a,n_{\mathrm{alive}})$ augmented bucket space. Figure~\ref{fig:f3f4}b,c plots the agent--butterfly L1 min-distance and the per-episode position-space coverage fraction. Butterfly depletion drives the dominant novelty gradient, and a coverage contraction emerges as a side effect (panel c), the fraction falling by $0.063\pm0.012$ between the first and last training quintiles (negative on $15/15$ seeds). Min-distance saturates at $4.05\pm0.09$ cells over the final $20\%$ of training (panel b), the steady-state of close pursuit on stochastically-retreating targets. The run-average catch rate above the position-only comparator (Section~\ref{sec:eval-h1}) is consistent with positional information entering the policy. A simulated butterfly end-of-episode density approaches uniform within the $100$-step horizon, so the agent's visit concentration does not reflect butterfly locations.

\looseness=-1
\textbf{Maze.} PRIME develops a near-deterministic state-conditional policy. Per-bucket $\arg\max Q$ entropy reaches $0.0015\pm0.0008$ nats at tail-$20\%$ across seeds (uniform ceiling $\log 5\approx 1.609$), and the mutual information between visited state and executed action is positive on every seed ($0.563\pm0.025$ nats, bias-corrected estimator). A random walker has zero state--action mutual information by construction, and the policy is state-conditional during the peak phase.
\subsection{Ablations}
\label{sec:eval-gate}

\begin{table}[t]
\setlength{\belowcaptionskip}{4pt}
\centering
\caption{Ablations of the full configuration. $K{=}1$ sets $V_{\mathrm{epi,LoTV}}\equiv 0$, and $\alpha{=}0$ disables the subtractor.}
\label{tab:ablations}
\footnotesize
\begin{tabular}{lccc}
\toprule
Configuration & Butterflies catch/ep & Butterflies peak & Maze peak \\
\midrule
Full ($K=5$, $\alpha=0.5$)      & $3.47 \pm 0.10$ & $5.01 \pm 0.14$ & $0.79 \pm 0.11$ \\
$K=1$                    & $3.29 \pm 0.17$ & $4.78 \pm 0.32$ & $0.70 \pm 0.11$ \\
$\alpha=0$ (gate off)    & $3.34 \pm 0.15$ & $4.93 \pm 0.18$ & $0.73 \pm 0.10$ \\
\bottomrule
\end{tabular}
\end{table}

\looseness=-1
Table~\ref{tab:ablations} reports the ablations. The full configuration leads both ablations on all six cells, the seed-paired CI of the difference excluding zero on five (Appendix~\ref{app:stats}). The $K{=}5$ gap over $K{=}1$ is $d=1.2$ on Butterflies catches, $d=1.0$ on Butterflies peak, and $d=0.9$ on Maze peak, and removing the gate costs $d=0.95$ on Butterflies catches against $d=0.6$ on Maze peak. $\Valea$ tail-$20\%$ is $2.29\times10^{-3}$ on Butterflies, the gate clamping $70\%$ of batches, against $2.03\times10^{-5}$ on Maze with $94\%$ clamped, and the cross-environment direction holds gate-off.
$V_{\mathrm{epi,LoTV}}\sim10^{-7}/10^{-8}$ confirms $V_{\mathrm{epi}}$ reduces to the count realization.

\section{Discussion}
\label{sec:discussion}

\looseness=-1
The construction realizes the parameter-epistemic component of EFE without the pragmatic component, with $V_{\mathrm{epi}}$ a surrogate for novelty~\cite{schwartenbeck2019novelty} and the rectified bracket of Eq.~\eqref{eq:reward} proxying expected transition entropy. In resolved high-entropy components, behavior is action-level aleatoric avoidance conditional on $V_{\mathrm{epi}}$ (Proposition~\ref{thm:neutrality}). The dark-room problem, dissolved in~\cite{friston2012darkroom} through phenotypic priors, does not arise without them, the agent having no incentive toward low-entropy occupancy, only smaller irreducible variability at comparable epistemic content. RND succeeds on both environments under its published defaults (Table~\ref{tab:main}), expected on near-homoskedastic environments, where prediction error aligns with expected information gain~\cite{caron2025bayesexplore}. As published, RND updates its predictor online, violating the Section~\ref{sec:guarantees} admissibility freeze. A snapshot-disciplined variant updated only at window boundaries is admissible, inherits that section's constants, and is the follow-up estimator. A mechanism-level comparison requires matched bucket keys, coverage formulas, and tuning audits, deferred to a dedicated study with a head-to-head against the switching baselines (Section~\ref{sec:background}). The non-triviality premise $\delta_\pi>0$ holds empirically, the tail-$20\%$ mean of $\|\piref-u\|_1$ at $0.0297\pm0.0007$ on Butterflies and $0.0221\pm0.0002$ on Maze, positive on every seed. The tabular count realization carries the seek phase only, the aleatoric penalty remaining on resolved support (Proposition~\ref{thm:neutrality}). Scaling to high-dimensional or continuous states needs a density-model pseudocount or representation-space disagreement signal, and the per-probe signal dilutes as $d$ grows.

\paragraph{Task-reward integration.}
\looseness=-1
PRIME trains without task reward and enters task-driven training as an exploration bonus or as reward-free pretraining adapted downstream~\cite{bellemare2016unifying,burda2019rnd,eysenbach2019diayn,laskin2021urlb}. The unsupervised-RL benchmark convention scores pretraining by efficiency of a short task-reward adaptation~\cite{laskin2021urlb}, reading the Maze trajectory of Section~\ref{sec:eval-h1} at its peak. The peak checkpoint is the transferable object (reach $0.79\pm0.11$), the decay the predicted preference-free behavior (Proposition~\ref{thm:neutrality}), and vanishing epistemic reward signals a resolved region. As an additive bonus $r_{\mathrm{ext}}+\eta\,\Rint$, the log-penalty is bounded (Lemma~\ref{lem:bounds}), ordering actions of equal epistemic content without overriding a task reward above that bound (Proposition~\ref{thm:gate}), and $\eta$ can anneal as the epistemic signal vanishes without distractor-seeking residue (Eq.~\eqref{eq:bald}). In reward-sparse settings the epistemic term densifies the signal until task reward takes over. Aleatoric avoidance minimizes outcome variance, aligning with risk-averse control, conflicting with risk-neutral return maximization on high-variance task states. A bonus-weight scheduler~\cite{riedmiller2018sacx} is the concrete next step.

\bibliographystyle{splncs04}
\bibliography{references}

\appendix

\section{Experimental details}
\label{app:exp}

\paragraph{Environments.} Both are $10\times10$ grids with action set $\{\text{N},\text{S},\text{E},\text{W},\text{Stay}\}$, $100$-step episodes, fixed-horizon truncation (no termination), and observation tensors in $\{0,1\}^{C\times10\times10}$. \textbf{Butterflies} ($C=4$ channels: agent, butterfly, wall, caught-flash) places $6$ butterflies at uniform-random empty cells. After each agent move every alive butterfly samples a move uniformly from the five actions with wall/bound blocking, and a butterfly sharing the agent cell is caught and removed (no respawn). The per-step butterfly motion is the aleatoric component, and depletion within the $100$-step horizon removes it. \textbf{Maze} ($C=3$ channels: agent, wall, flag) is a fixed S-corridor with a consumable goal flag at $(9,9)$ and deterministic transitions. Both report the task signal (catches, goal reach) outside the observation and emit zero per-step reward to the intrinsic agent. Per-seed budget is $250{,}000$ env-steps on Butterflies and $200{,}000$ on Maze.

\paragraph{Shared trunk.} Every network uses the same two-layer convolutional trunk, two $3\times3$ convolutions taking $C\to16\to32$ channels with padding $1$ and a ReLU after each, flattened to $3200$ and linearly mapped to a $64$-dim embedding.

\paragraph{PRIME networks.} The statistics ensemble holds $K=5$ QR-DQN heads, each a trunk plus a quantile output layer $\mathrm{Linear}(64,|\Acal|N_q)$ with $N_q=11$ learned quantile locations passed through $\tanh\cdot G_q$ ($|q_{k,i}|\le G_q=2$), wrapped with a frozen randomized prior of strength $\beta_{\mathrm{prior}}=2$ and reseeded per head, with targets hard-synced at window boundaries. The control ensemble holds $K=5$ unbounded $Q$-heads $\mathrm{Linear}(64,|\Acal|)$ trained by Double-DQN, with the range $|Q_k|\le\Qmax=20$ enforced softly by a penalty $0.1\cdot\mathrm{ReLU}(|Q|-\Qmax)^2$. The aleatoric features are an ICM inverse-dynamics encoder $\phi$ (trunk to $\mathrm{Linear}(3200,32)\,\tanh$, $\dim 32$, with inverse head $\mathrm{Linear}(64,64)\,\mathrm{ReLU}\,\mathrm{Linear}(64,|\Acal|)$) and a fixed $\ell_2$-normalized flattened observation $\psi$ ($\dim C{\cdot}100$). The count realization keys on $(\text{agent row},\text{agent col},a,n_{\mathrm{alive}})$.

\paragraph{Hyperparameters.} Table~\ref{tab:hp-prime} lists the PRIME configuration, byte-identical across the two environments except for the environment handle. Table~\ref{tab:hp-base} lists the baseline configuration, shared across all four DQN baselines and both environments.

\begin{table}[h]
\footnotesize
\setlength{\tabcolsep}{5pt}
\centering
\caption{PRIME hyperparameters (identical across both environments).}
\label{tab:hp-prime}
\begin{tabular}{llll}
\toprule
$K$ & $5$ & discount $\gamma$ & $0.9$ \\
quantile locations $N_q$ & $11$ & aleatoric weight $\lambda$ & $0.5$ \\
quantile bound $G_q$ & $2.0$ & gate weight $\alpha$ & $0.5$ \\
control bound $\Qmax$ & $20.0$ & log scale $\sigma_0^2$ & $0.5$ \\
calibration $a_{\min}/a_{\max}/b_{\max}$ & $1.0/2.0/1.0$ & gate decay $\beta$ & $0.95$ \\
randomized-prior $\beta_{\mathrm{prior}}$ & $2.0$ & gate horizon $h_g$ & $4$ \\
count scale $\kappa$ & $0.5$ & temperature $\tau$ & $1.0$ \\
probes $m$ & $8$ & feature dim $\phi$ & $32$ \\
window length & $1000$ & warmup window & $1000$ \\
snapshot buffer & $5000$ & replay buffer & $100000$ \\
calibration samples & $256$ & neighbors & $16$ \\
batch size & $64$ & target sync & $1000$ \\
update period (main/stats/inv) & $4/4/4$ & $\eps_{\mathrm{start}}/\eps_{\mathrm{end}}$ & $1.0/0.01$ \\
$\mathrm{lr}$ control/stats/$\phi$ & $3{\cdot}10^{-4}/10^{-4}/10^{-3}$ & reward scale/clip & $50/2.0$ \\
\bottomrule
\end{tabular}
\end{table}

\begin{table}[h]
\footnotesize
\setlength{\tabcolsep}{6pt}
\centering
\caption{Baseline hyperparameters (shared across all DQN baselines and both environments).}
\label{tab:hp-base}
\begin{tabular}{llll}
\toprule
discount $\gamma$ & $0.99$ & batch size & $64$ \\
learning rate & $3{\cdot}10^{-4}$ & target sync & $1000$ \\
replay buffer & $100000$ & learning starts & $1000$ \\
$\eps_{\mathrm{end}}$ & $0.05$ & exploration fraction & $0.25$ \\
gradient clip & $10.0$ & optimizer & Adam \\
\bottomrule
\end{tabular}
\end{table}

All networks use Adam at its default moments $(0.9,0.999)$ and $\eps=10^{-8}$. Disagreement uses a $K=5$ forward-model ensemble with prediction clip $5.0$; RND uses a frozen random target and a predictor trained on the same replay minibatches as its DQN update~\cite{burda2019rnd}, with reward clip $5.0$; SMIRL augments the observation with a per-cell density map and a time plane. The three intrinsic baselines normalize the intrinsic reward by a running standard deviation. Each configuration runs $n=15$ seeds, seeds $5$--$14$ collected after seeds $0$--$4$ under the identical configuration and code. The exploration schedule is linear over the first $25\%$ of training. Control critics learn from $y^k=\Rint(s,a)+\gamma Q_k^{-}(s',\arg\max_{a'}Q_k(s',a'))$ with the gradient through $\Rint$ stopped.

\paragraph{Metrics.} Catches per episode is the per-seed mean of the episode catch count over all episodes of a run, reported as mean $\pm$ std across seeds. Peak is the per-seed maximum over training of the rolling mean across $20$ consecutive episodes, on catches for Butterflies and on goal reach for Maze, and last rolling-$20$ is the final value of the same window. An episode counts as a reach when the goal flag is collected at least once. Tail-$20\%$ denotes the mean of a logged scalar over the final $20\%$ of env-steps. The coverage fraction is the per-episode fraction of the $100$ cells visited, and the pursuit distance is the per-episode mean of the per-step L1 distance to the nearest alive butterfly. The random-walker reference is simulated for $10^4$ episodes on the deployed environment, and the position-only comparator ($2.45$ catches/ep) is the highest catch rate among the simulated policies blind to butterfly positions, attained by a go-to-center-and-stay policy. The state--action mutual information is estimated every $1000$ steps from a ring buffer of executed actions keyed by the source-state components of the count bucket (agent position and $n_{\mathrm{alive}}$), with Miller--Madow correction on the marginal and every per-bucket conditional entropy, buckets under five samples dropped, and the estimate clamped at zero. The per-bucket $\arg\max Q$ entropy is the bucket-size-weighted mean, over snapshot buckets with at least four samples, of the entropy of the $\arg\max Q$ action across the bucket's snapshot states.

\paragraph{Hyperparameter provenance.} The baselines run at their published settings, held fixed across both environments (Table~\ref{tab:hp-base}). None was tuned per environment or per task, and no baseline sweep was performed. PRIME's single configuration (Table~\ref{tab:hp-prime}) was selected once for both environments jointly and frozen. The only configured differences between PRIME and the baselines are the discount ($\gamma=0.9$ vs.\ $0.99$) and the exploration floor ($\eps_{\mathrm{end}}=0.01$ vs.\ $0.05$), alongside the intrinsic-reward components absent from the baselines. The comparison therefore grants the baselines their published settings at matched budget while constraining PRIME to one configuration, and gives PRIME no per-environment tuning advantage.

\section{Statistical analysis}
\label{app:stats}

Each configuration runs $n=15$ seeds. Interval estimation is primary~\cite{agarwal2021precipice}, namely the per-method IQM with a stratified-bootstrap $95\%$ CI (Table~\ref{tab:iqm}), the bootstrap $95\%$ CI of the seed-mean difference, Cohen's $d$, and the probability of improvement $P(A{>}B)$ (all run pairs, ties half-weighted), with rank tests confirmatory. Seeds index matched blocks, the seed fixing the server assignment shared by every method, so the confirmatory test is the exact two-sided Wilcoxon signed-rank on per-seed differences, with the exact Mann--Whitney $U$ alongside, each Holm-corrected across the eleven-comparison family. At $n=15$ the exact signed-rank floor is $2/2^{15}\approx6.1\times10^{-5}$, against a best rank-test floor of $2/\binom{10}{5}=0.0079$ (exact Mann--Whitney) at the previous $n=5$, where no Holm-corrected rank test could reach $\alpha=0.05$. All intervals use $10^4$ percentile resamples under a fixed stream, resampling seeds within each configuration for the method CIs and per-seed differences for the difference CIs. Table~\ref{tab:stats} reports the comparisons against Table~\ref{tab:main} and the ablations, with per-seed values in Table~\ref{tab:seed}.

\begin{table}[h]
\footnotesize
\centering
\caption{Pairwise comparisons ($n=15$). $\Delta$ is the seed-mean difference (first minus second), CI its seed-paired bootstrap $95\%$ interval, $d$ Cohen's $d$, PoI the probability of improvement, and $p_{\mathrm{W}}$/$p_{\mathrm{U}}$ the Holm-corrected exact Wilcoxon signed-rank and Mann--Whitney $p$. Bold marks $p<0.05$. BF, Butterflies; c, catch/ep; p, peak.}
\label{tab:stats}
\setlength{\tabcolsep}{2.5pt}
\begin{tabular}{lcccccc}
\toprule
Comparison (metric) & $\Delta$ & $95\%$ CI & $d$ & PoI & $p_{\mathrm{W}}$ & $p_{\mathrm{U}}$ \\
\midrule
PRIME $-$ RND (BF c)            & $+0.210$ & $[+0.137,+0.284]$ & $1.67$ & $0.89$ & $\mathbf{0.002}$ & $\mathbf{<10^{-3}}$ \\
PRIME $-$ RND (BF p)            & $+0.693$ & $[+0.580,+0.793]$ & $3.94$ & $1.00$ & $\mathbf{<10^{-3}}$ & $\mathbf{<10^{-3}}$ \\
PRIME $-$ SMIRL (Maze p)        & $+0.783$ & $[+0.733,+0.837]$ & $10.0$ & $1.00$ & $\mathbf{<10^{-3}}$ & $\mathbf{<10^{-3}}$ \\
PRIME $-$ Disag.\ (BF c)        & $+1.743$ & $[+1.660,+1.825]$ & $16.1$ & $1.00$ & $\mathbf{<10^{-3}}$ & $\mathbf{<10^{-3}}$ \\
PRIME $-$ Disag.\ (BF p)        & $+2.023$ & $[+1.903,+2.127]$ & $14.7$ & $1.00$ & $\mathbf{<10^{-3}}$ & $\mathbf{<10^{-3}}$ \\
\midrule
Full $-$      $K{=}1$ (BF c)    & $+0.176$ & $[+0.065,+0.299]$ & $1.23$ & $0.81$ & $0.075$ & $\mathbf{0.019}$ \\
Full $-$      $K{=}1$ (BF p)    & $+0.233$ & $[+0.047,+0.427]$ & $0.96$ & $0.74$ & $0.192$ & $0.147$ \\
Full $-$      $K{=}1$ (Maze p)  & $+0.093$ & $[+0.020,+0.170]$ & $0.87$ & $0.70$ & $0.231$ & $0.244$ \\
Full $-$      gate-off (BF c)   & $+0.122$ & $[+0.027,+0.206]$ & $0.95$ & $0.73$ & $0.128$ & $0.147$ \\
Full $-$      gate-off (BF p)   & $+0.080$ & $[-0.050,+0.197]$ & $0.50$ & $0.64$ & $0.231$ & $0.297$ \\
Full $-$      gate-off (Maze p) & $+0.067$ & $[+0.003,+0.133]$ & $0.63$ & $0.67$ & $0.231$ & $0.297$ \\
\bottomrule
\end{tabular}
\end{table}

All five PRIME--baseline comparisons clear $\alpha=0.05$ under the Holm-corrected exact signed-rank, with probability of improvement $0.89$--$1.00$ and paired CIs excluding zero. PRIME separates from RND on both Butterflies metrics at $n=15$ (catch $d=1.67$, peak $d=3.94$), where the $n=5$ sample left the catch comparison unresolved. The full configuration leads the ablations on every cell with paired CIs excluding zero on five of six. The $K{=}1$ Butterflies-catch effect clears the Holm family on the Mann--Whitney test ($p=0.019$, signed-rank $p=0.075$), and the remaining ablation effects are medium, $d=0.5$--$1.0$, without surviving the correction. Holm-corrected Welch's $t$, computed as a parametric check, agrees with the Mann--Whitney column on all eleven comparisons. The Extrinsic-DQN Maze mean reflects seed $7$, which never reaches the goal within budget under the sparse extrinsic reward, while its IQM stays $1.00$ (Table~\ref{tab:iqm}).

\begin{table}[h]
\footnotesize
\setlength{\tabcolsep}{4pt}
\centering
\caption{Per-method IQM with stratified-bootstrap $95\%$ CI ($n=15$).}
\label{tab:iqm}
\begin{tabular}{lccc}
\toprule
Method & Butterflies catch/ep & Butterflies peak & Maze peak \\
\midrule
PRIME      & $3.47$ $[3.42,3.53]$ & $5.03$ $[4.94,5.09]$ & $0.78$ $[0.72,0.86]$ \\
$K{=}1$    & $3.32$ $[3.21,3.40]$ & $4.82$ $[4.63,4.97]$ & $0.72$ $[0.64,0.77]$ \\
gate-off   & $3.36$ $[3.25,3.44]$ & $4.94$ $[4.82,5.04]$ & $0.72$ $[0.67,0.78]$ \\
RND        & $3.25$ $[3.17,3.33]$ & $4.30$ $[4.19,4.43]$ & $1.00$ $[1.00,1.00]$ \\
SMIRL      & $2.70$ $[2.61,2.81]$ & $3.94$ $[3.69,4.16]$ & $0.00$ $[0.00,0.02]$ \\
Disag.     & $1.71$ $[1.67,1.78]$ & $2.99$ $[2.93,3.06]$ & $1.00$ $[1.00,1.00]$ \\
Ext-DQN    & $4.16$ $[3.70,4.51]$ & $5.72$ $[5.49,5.86]$ & $1.00$ $[1.00,1.00]$ \\
\bottomrule
\end{tabular}
\end{table}

\begin{table}[h]
\scriptsize
\setlength{\tabcolsep}{2.5pt}
\centering
\caption{Per-seed values (seeds $0$--$14$). BF catch/ep and peak from the $250$k sweep, Maze peak from the $200$k sweep.}
\label{tab:seed}
\begin{tabular}{lccccccccccccccc}
\toprule
Method & 0&1&2&3&4&5&6&7&8&9&10&11&12&13&14 \\
\midrule
& \multicolumn{15}{c}{BF catch/ep} \\
\cmidrule(lr){2-16}
PRIME     & 3.44&3.21&3.61&3.59&3.43&3.44&3.58&3.52&3.49&3.42&3.55&3.39&3.40&3.38&3.53 \\
$K{=}1$   & 3.43&3.39&3.23&3.39&3.22&3.50&2.93&3.28&3.23&3.21&2.94&3.32&3.46&3.45&3.37 \\
gate-off  & 3.26&3.50&3.34&3.46&3.09&3.12&3.29&3.51&3.25&3.31&3.40&3.49&3.55&3.15&3.43 \\
RND       & 3.41&3.23&3.18&3.38&3.39&3.02&3.21&3.20&3.30&3.22&3.56&3.11&3.19&3.08&3.35 \\
SMIRL     & 3.01&2.75&2.55&2.86&2.24&3.09&2.72&2.69&2.65&2.57&2.86&2.54&2.76&2.74&2.58 \\
Disag.    & 1.84&1.78&1.65&1.65&1.48&1.68&1.78&1.65&1.66&1.82&1.66&1.74&1.96&1.69&1.80 \\
Ext-DQN   & 4.89&4.74&5.12&4.44&3.28&4.30&3.13&4.52&3.06&4.26&4.43&3.22&4.39&4.11&3.74 \\
\midrule
& \multicolumn{15}{c}{BF peak} \\
\cmidrule(lr){2-16}
PRIME     & 4.85&4.70&5.00&5.15&5.00&4.85&5.15&5.05&4.95&4.95&5.20&5.05&5.10&5.00&5.20 \\
$K{=}1$   & 5.20&4.95&4.90&5.05&4.60&5.00&4.50&4.80&4.45&4.85&4.05&4.45&4.75&5.15&5.00 \\
gate-off  & 5.00&5.25&4.95&4.95&4.70&4.70&4.90&5.05&4.65&5.10&4.70&5.10&5.10&4.90&4.95 \\
RND       & 4.55&4.10&4.30&4.50&4.50&4.05&4.30&4.10&4.75&4.25&4.55&4.15&4.20&4.20&4.30 \\
SMIRL     & 4.05&3.65&4.00&4.20&3.45&4.45&3.90&3.75&3.45&3.65&4.25&3.40&4.05&4.65&4.25 \\
Disag.    & 2.85&3.25&3.05&3.05&2.70&3.15&3.00&2.90&2.95&3.00&2.95&3.00&3.10&3.05&2.85 \\
Ext-DQN   & 6.00&5.75&6.00&5.95&5.25&5.85&5.10&5.90&5.40&5.85&5.60&5.05&5.80&5.65&5.70 \\
\midrule
& \multicolumn{15}{c}{Maze peak} \\
\cmidrule(lr){2-16}
PRIME     & .95&.85&.75&.90&.65&.80&.70&.75&.80&.70&.90&.70&.65&1.0&.80 \\
$K{=}1$   & .65&.50&.80&.70&.70&.70&.80&.50&.85&.80&.75&.65&.60&.70&.80 \\
gate-off  & .70&.80&.60&.55&.75&.80&.65&.75&.65&.65&.70&.70&.80&.95&.85 \\
RND       & 1.0&1.0&1.0&1.0&1.0&1.0&1.0&1.0&1.0&1.0&1.0&1.0&1.0&1.0&1.0 \\
SMIRL     & .05&.00&.00&.00&.00&.05&.05&.00&.00&.00&.00&.00&.00&.00&.00 \\
Disag.    & 1.0&1.0&1.0&1.0&1.0&1.0&1.0&1.0&1.0&1.0&1.0&1.0&1.0&1.0&1.0 \\
Ext-DQN   & 1.0&1.0&1.0&1.0&1.0&1.0&1.0&.00&1.0&1.0&1.0&1.0&1.0&1.0&1.0 \\
\bottomrule
\end{tabular}
\end{table}

Table~\ref{tab:mi} reports the per-seed state--action mutual information of the executed policy (estimator in Appendix~\ref{app:exp}), the Maze value underlying Section~\ref{sec:eval-butter}. The measure is positive on every seed in both environments and sits below the uniform ceiling $\log 5\approx1.609$ nats, while a state-independent policy is zero by construction. Butterflies carries the larger value ($1.007\pm0.018$ against $0.563\pm0.025$ nats). The Butterflies estimator keys on agent position and $n_{\mathrm{alive}}$ (Appendix~\ref{app:exp}), so its value reflects conditioning on catch progress alongside position, while the Maze value, keyed on position alone, is the unconfounded evidence of a state-conditional policy.

\begin{table}[h]
\scriptsize
\centering
\caption{Per-seed state--action mutual information for PRIME (tail-$20\%$ means, nats). Butterflies from the $250$k sweep, Maze from the $200$k sweep.}
\label{tab:mi}
\setlength{\tabcolsep}{1pt}
\begin{tabular}{lccccccccccccccc}
\toprule
Environment & 0&1&2&3&4&5&6&7&8&9&10&11&12&13&14 \\
\midrule
Butterflies & 1.043&0.989&1.018&1.016&1.012&1.001&1.022&1.002&1.016&0.985&1.029&0.991&0.989&1.012&0.978 \\
Maze        & 0.529&0.560&0.550&0.616&0.535&0.606&0.583&0.568&0.567&0.568&0.577&0.551&0.540&0.537&0.557 \\
\bottomrule
\end{tabular}
\end{table}

\section{Target frame and count realization}
\label{app:estimators}

The novelty term $I(\theta;S'\mid s,a,D)$ admits more than one model-free estimator within the admissible class of Section~\ref{sec:guarantees}. The return-space cross-head variance $V_{\mathrm{epi,LoTV}}$ of Eq.~\eqref{eq:lotv} is the theoretical-target frame, the ensemble-disagreement signal on a value-side ensemble under shared $\piref$ policy evaluation, with no posterior semantics claimed. The count $V_{\mathrm{epi,count}}$ of Eq.~\eqref{eq:vepi-count} is the operative estimator in the reward. Both are admissible bounded continuation bonuses, so Proposition~\ref{thm:stationarity}, Lemma~\ref{lem:bounds}, Corollary~\ref{thm:targets}, and Propositions~\ref{thm:gate}--\ref{thm:neutrality} hold for either with the same constants.

Both bound the same quantity. The count satisfies the chain $\mathrm{IG}\le\mathrm{PG}\le 1/\hat N$ of~\cite{bellemare2016unifying}, so $V_{\mathrm{epi,count}}=\kappa\gamma^2\tanh^2(1/\sqrt{N_{s,a}+1})$ is a bounded estimator of parameter information gain decaying as $1/N_{s,a}$. On the finite didactic state spaces the count is exact, an admissible realization of the novelty term, and the return-space variance of Eq.~\eqref{eq:lotv} is the alternative realization in the same class.

The return-space frame collapses on these environments. Under shared $\piref$ policy evaluation the $K$ heads converge to a common fixed point of $\Pi W_1\Tref$, a documented property of value-side ensembles under shared TD targets~\cite{sheikh2022diversity}. Randomized priors~\cite{osband2018randomized} address prior variance but do not decorrelate the targets. Cross-head variance therefore underestimates information gain, measured near $10^{-7}$ (Section~\ref{sec:eval-gate}). The same convergence drives Proposition~\ref{thm:neutrality}. The alignment of the conditional means $\mathbb E[Y_k\mid s,a]$ that sends $V_{\mathrm{epi,LoTV}}\to0$ is the mechanism by which the reward reduces to the aleatoric penalty on resolved support. Persistent cross-head disagreement would require decorrelated targets or a parameter-side ensemble outside the shared-$\piref$ evaluation, a different construction left to a scalable extension where the count is infeasible.

\begin{figure}[t]
\centering
\includegraphics[width=\linewidth]{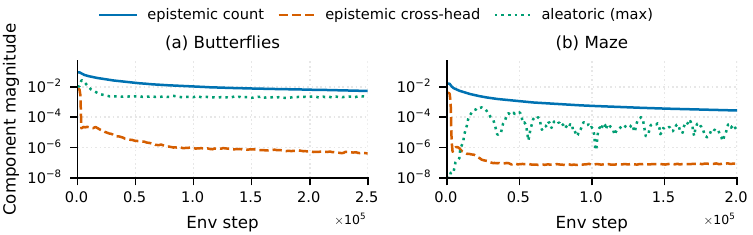}
\caption{Reward-component magnitudes for PRIME, seed-mean traces ($n=15$) of the count realization, the cross-head variance, and the aleatoric term $\Valea$, rolling means over logged batches on a log scale with display floor $10^{-8}$. Butterflies from the $250$k sweep, Maze from the $200$k sweep.}
\label{fig:components}
\end{figure}

Figure~\ref{fig:components} traces the deployed reward components over training. On Butterflies the cross-head variance falls from a $5.9\times10^{-4}$ first-decile mean to a $5.0\times10^{-7}$ tail-$20\%$ mean while the count decays from $5.0\times10^{-2}$ to $6.0\times10^{-3}$, and on Maze the corresponding tails are $8.6\times10^{-8}$ against $3.1\times10^{-4}$, the count ending three to four orders of magnitude above the cross-head variance on both environments. The count decays under accumulating coverage toward the Proposition~\ref{thm:neutrality} limit. The aleatoric term separates the environments, $2.3\times10^{-3}$ on stochastic Butterflies against $2.0\times10^{-5}$ on deterministic Maze.

\section{Noisy-TV avoidance and the controllable bucket key}
\label{app:noisytv}

The epistemic count keys on $(\text{agent row},\text{agent col},a,n_{\mathrm{alive}})$, the controllable abstraction, excluding butterfly positions. Butterfly motion is a parameter-free random walk, so by the decomposition of Eq.~\eqref{eq:bald} its parameter information gain is zero and a count estimating $I(\theta;S')$ should not count it. Computing novelty over a controllable or task-relevant abstraction is standard. Contingency-aware exploration counts agent-controllable position and excludes uncontrollable moving objects~\cite{choi2019contingency}; feature-space pseudocounts count feature-relevant structure in place of raw stochastic configurations~\cite{martin2017count}; inverse-dynamics curiosity~\cite{pathak2017icm} and representation-change rewards~\cite{raileanu2020ride} exclude uncontrollable components by construction.

The aleatoric detectors read the full observation. The probe $\Valeraw$ operates on $\psi$, the $\ell_2$-normalized full observation including the butterfly channel, and its conditional variance over the $(s,a)$ bucket registers the butterfly motion. The policy receives butterfly positions as input, and the run-average catch rate above the position-only comparator (Section~\ref{sec:eval-h1}) is consistent with their use. Only the epistemic count excludes them, by the epistemic--aleatoric assignment.

The didactic environments do not exercise active noise avoidance. Butterflies are captured without replacement, so the stochastic component depletes within the $100$-step horizon and no persistent distractor remains. RND, on the full observation, attains both environments (Table~\ref{tab:main}). The aleatoric penalty is correspondingly small on Butterflies ($\Valea$ tail-$20\%$ $2.3\times10^{-3}$, Section~\ref{sec:eval-gate}) and the cross-environment direction holds with the gate disabled (Table~\ref{tab:ablations}). The experiments demonstrate direction-free epistemic behavior under one configuration. Active aleatoric avoidance is a property of the objective (Eq.~\eqref{eq:bald}, Proposition~\ref{thm:gate}) whose demonstration needs a persistent-noise environment and is deferred.

\section{Theory and the deep-RL implementation}
\label{app:scope}

The within-window results split by what the deployed system requires (Table~\ref{tab:scope}). The deployed epistemic term is the tabular count, for which Proposition~\ref{thm:stationarity} and the bounds of Lemma~\ref{lem:bounds} hold exactly, and the target ceiling of Corollary~\ref{thm:targets} requires none of the kernel, bandwidth, or $\beta$-mixing assumptions. The ceiling is checked against the empirical $|q_{\mathrm{taken}}|_{\max}$ in Figure~\ref{fig:f3f4}a. The non-parametric concentration of Proposition~\ref{thm:concentration} describes the scalable estimator class, and asymptotic neutrality (Proposition~\ref{thm:neutrality}) is conditional on each distributional head converging in-hypothesis-class to the projected fixed point.

\begin{table}[h]
\footnotesize
\setlength{\tabcolsep}{6pt}
\centering
\caption{Each result against its status in the deployed tabular-count system.}
\label{tab:scope}
\begin{tabular}{lll}
\toprule
Result & Status in deployment & Function approximation \\
\midrule
Prop.~\ref{thm:stationarity} contraction & exact & none \\
Lemma~\ref{lem:bounds} bounds & exact & none \\
Cor.~\ref{thm:targets} target ceiling & conditional (Fig.~\ref{fig:f3f4}a) & soft $|Q_k|\le\Qmax$ \\
Prop.~\ref{thm:gate} gate preference & exact (specified reward) & none \\
Prop.~\ref{thm:concentration} concentration & conditional, scalable class & yes \\
Prop.~\ref{thm:neutrality} neutrality & conditional (in-class) & yes (statistics heads) \\
\bottomrule
\end{tabular}
\end{table}

Three premises hold only approximately in the deep implementation. A learned encoder is a fixed-kernel machine only under lazy training~\cite{jacot2018ntk}. Once the features themselves move, the effective kernel and covering number drift, and bootstrapping can collapse the feature geometry~\cite{lyle2022capacity}, so the fixed-bandwidth and capacity premises of Proposition~\ref{thm:concentration} are approximate. The snapshot stream is generated by a continually updated network under a non-stationary acting policy, so the stationary $\beta$-mixing premise reads as its non-stationary extension. In-class convergence is an assumption we do not establish. The distributional optimality operator is not a contraction and need not admit a fixed point~\cite{bellemare2017distributional}, and the quantile contraction is proved for fixed-policy evaluation~\cite{dabney2018qrdqn}. These errors fall on the return-space term, already collapsed, and on the control critics, whose target ceiling is verified directly (Fig.~\ref{fig:f3f4}a) under a softly enforced $|Q_k|\le\Qmax$. The deadly triad of function approximation, bootstrapping, and off-policy updates~\cite{vanhasselt2018triad} bears on the function-approximation components, and the mitigations adopted (target networks, Double-DQN, the window-capped bootstrap magnitude) are those it identifies as reducing divergence. The tabular within-window guarantees are untouched.

\section{Deferred proofs}
\label{app:proofs}

\begin{proof}[Proof of Lemma~\ref{lem:bounds}]
Probability-measure centering at most doubles the $G_E$ bound on $\Ehat_k^{-}$. Averaging under $\piref$ and multiplying by $\gamma$ yield $|g_k|\le 2G_E$ and $|Y_k|\le 2\gamma G_E$. Popoviciu's inequality on the $4\gamma G_E$ range gives $\Var(Y_k)\le 4\gamma^2 G_E^2$, and the law of total variance distributes this between $V_{\mathrm{epi,LoTV}}$ and $\Vale$. $V_{\mathrm{epi,count}}\le\kappa\gamma^2$ since $\tanh^2\le 1$. The probe bounds follow from $|w_i^\top\phi(S')|\le B_\phi$ and Popoviciu. The reward bound follows from monotonicity of $\log(1+\cdot)$.
\end{proof}

\begin{proof}[Proof sketch of Proposition~\ref{thm:concentration}]
The rate is standard for $\beta$-mixing nonparametric regression under capacity control. The argument uses U-statistic decoupling, blocking, and covering-number bounds~\cite{yu1994mixing} and is conditional on (i)--(iv).
\end{proof}

\begin{proof}[Proof of Proposition~\ref{thm:neutrality}]
Shared-target convergence aligns the conditional means $\mathbb E[Y_k\mid s,a]$ across $k$, driving $V_{\mathrm{epi,LoTV}}\to 0$ (conditional on in-class convergence). The pseudocount realization satisfies
\[ V_{\mathrm{epi,count}}=\kappa\gamma^2\tanh^2\!\big(1/\sqrt{N_{s,a}+1}\big)\to 0 \quad\text{as } N_{s,a}\to\infty \]
by direct calculation. The aggregated $\Vepi\to0$ on the visited support gives $\Vepiahead\to0$, and substituting in Eq.~\eqref{eq:reward} yields the stated limit.
\end{proof}

\end{document}